\newcommand*\bigcdot{\mathpalette\bigcdot@{.5}}
\newcommand*\bigcdot@[2]{\mathbin{\vcenter{\hbox{\scalebox{#2}{$\m@th#1\bullet$}}}}}
\newtheorem{proposition}{Proposition}
\newtheorem{lemma}[proposition]{Lemma}
\newtheorem{theorem}[proposition]{Theorem}
\newtheorem{defn}{Definition}
\newtheorem{remark}{Remark}
\def\eqdef{\stackrel{\text{def}}{=}}
\def\E{{\mathbb E}}
\def\G{{\mathcal G}}
\def\S{{\mathcal S}}
\newcommand{\condE}[1]{\E_{\mid #1}}
\newcommand{\approxhgreedy}[2]{\G^{#1}_{h}(#2)}
\begin{document}
%
\title{How to Combine Tree-Search Methods in Reinforcement Learning}
\author{Yonathan Efroni \\
Technion, Israel\\
\And
Gal Dalal \\
Technion, Israel\\
\And
Bruno Scherrer  \\
INRIA, Villers les Nancy, France\\
\And
  Shie Mannor \\
 Technion, Israel\\
}
\maketitle
\begin{abstract}

Finite-horizon lookahead policies are abundantly used in Reinforcement Learning and demonstrate impressive empirical success. Usually, the lookahead policies are implemented with specific planning methods such as Monte Carlo Tree Search (e.g. in AlphaZero \cite{silver2017mastering}). Referring to the planning problem as tree search, a reasonable practice in these implementations is to back up the value only at the leaves while the information obtained at the root is not leveraged other than for updating the policy. Here, we question the potency of this approach.
Namely, the latter procedure is non-contractive in general, and its convergence is not guaranteed. Our proposed enhancement is straightforward and simple: use the return from the optimal tree path to back up the values at the descendants of the root. This leads to a $\gamma^h$-contracting procedure, where $\gamma$ is the discount factor and $h$ is the tree depth. To establish our results, we first introduce a notion called \emph{multiple-step greedy consistency}. We then provide convergence rates for two algorithmic instantiations of the above enhancement in the presence of noise injected to both the tree search stage and value estimation stage.

%
\end{abstract}
\section{Introduction}
A significant portion of the Reinforcement Learning (RL) literature regards Policy Iteration (PI) methods. This family of algorithms contains numerous variants which were thoroughly analyzed \cite{puterman1994markov,bertsekas1995neuro} and constitute the foundation of sophisticated state-of-the-art implementations \cite{mnih2016asynchronous,silver2017mastering}. The principal mechanism of PI is to alternate between policy evaluation and policy improvement. Various well-studied approaches exist for the policy evaluation stages; these may rely on single-step bootstrap, multi-step Monte-Carlo return, or parameter-controlled interpolation of the former two. For the policy improvement stage, theoretical analysis was mostly reserved for policies that are 1-step greedy, while recent prominent implementations of multiple-step greedy policies exhibited promising empirical behavior \cite{silver2017mastering,silver2017mastering2}. 

Relying on recent advances in the analysis of multiple-step lookahead policies \cite{beyond2018efroni,efroni2018multiple}, we study the convergence of a PI scheme whose improvement stage is $h$-step greedy with respect to (w.r.t.) the value function, for $h>1.$ Calculating such policies can be done via Dynamic Programming (DP) or other planning methods such as tree search. Combined with sampling, the latter corresponds to the famous Monte Carlo Tree Search (MCTS) algorithm employed in \cite{silver2017mastering,silver2017mastering2}. In this work, we show that even when partial (inexact) policy evaluation is performed and noise is added to it, along with a noisy policy improvement stage, the above PI scheme converges with a $\gamma^h$ contraction coefficient. While doing so, we also isolate a sufficient convergence condition which we refer to as \emph{$h$-greedy consistency} and relate it to previous 1-step greedy relevant literature. 

A straightforward `naive' implementation of the PI scheme described above would perform an $h$-step greedy policy improvement and then evaluate that policy by bootstrapping the `usual' value function. 
Surprisingly, we find that this procedure does not necessarily contracts toward the optimal value, and give an example where it is indeed non-contractive. This contraction coefficient depends both on $h$ and on the partial evaluation parameter: $m$ in the case of $m$-step return, and $\lambda$ when eligibility trace is used. The non-contraction occurs even when the $h$-greedy consistency condition is satisfied. 

To solve this issue, we propose an easy fix which we employ in all our algorithms, that relieves the convergence rate from the dependence of $m$ and $\lambda$, and  allows the $\gamma^h$ contraction mentioned earlier in this section. Let us treat each state as a root of a tree of depth $h;$ then our proposed fix is the following. Instead of backing up the value only at the leaves and ridding of all non-root related tree-search outputs, we reuse the tree-search byproducts and back up the optimal value of the root node children. Hence, instead of bootstrapping the `usual' value function in the evaluation stage, we bootstrap the optimal value obtained from the $h-1$ horizon optimal planning problem.

The contribution of this work is primarily theoretical, but in Section~\ref{sec: experiments} we also present experimental results on a toy domain. The experiments support our analysis by exhibiting better performance of our enhancement above compared to the `naive' algorithm. Additionally, we identified previous practical usages of this enhancement in literature. In \cite{baxter1999tdleaf}, the authors proposed backing up the optimal tree search value as a heuristic. They named the algorithm TDLeaf($\lambda$) and showcase its outperformance over the alternative `naive' approach. A more recent work \cite{lai2015giraffe} introduced a deep learning implementation of TDLeaf($\lambda$) called Giraffe. Testing it on the game of Chess, the authors claim (during publication) it is ``the most successful attempt thus far at using end-to-end machine learning to play chess''. In light of our theoretical results and empirical success described above, we argue that backing up the optimal value from a tree search should be considered as a `best practice' among RL practitioners.

\section{Preliminaries}
Our framework is the infinite-horizon discounted Markov Decision Process (MDP). An MDP is defined as the 5-tuple $(\mathcal{S}, \mathcal{A},P,R,\gamma)$ \cite{puterman1994markov}, where ${\mathcal S}$ is a finite state space, ${\mathcal A}$ is a finite action space, $P \equiv P(s'|s,a)$ is a transition kernel, $R \equiv r(s,a)\in[R_{\min},R_{\max}]$ is a reward function, and $\gamma\in(0,1)$ is a discount factor. Let $\pi: \mathcal{S}\rightarrow \mathcal{P}(\mathcal{A})$ be a stationary policy, where $\mathcal{P}(\mathcal{A})$ is a probability distribution on $\mathcal{A}$. Let $v^\pi \in \mathbb{R}^{|\mathcal{S}|}$ be the value of a policy $\pi,$ defined in state $s$ as $v^\pi(s) \equiv \condE{s}^\pi[\sum_{t=0}^\infty\gamma^tr(s_t,\pi(s_t))]$, where $\condE{s}^\pi$ denotes expectation w.r.t. the distribution induced by $\pi$ and conditioned on the event $\{s_0=s\}.$  For brevity, we respectively denote the reward and value at time $t$ by $r_t\equiv r(s_t,\pi_t(s_t))$ and $v_t\equiv v(s_t).$  It is known that ${v^\pi=\sum_{t=0}^\infty \gamma^t (P^\pi)^t r^\pi=(I-\gamma P^\pi)^{-1}r^\pi}$, with the component-wise values $[P^\pi]_{s,s'}  \triangleq P(s'\mid s, \pi(s))$ and $[r^\pi]_s \triangleq  r(s,\pi(s))$. Our goal is to find a policy $\pi^*$ yielding the optimal value $v^*$ such that $v^* = \max_\pi (I-\gamma P^\pi)^{-1} r^\pi$.
This goal can be achieved using the three classical operators (with equalities holding component-wise):  
\begin{align}
\forall v,\pi,~  T^\pi v & =  r^\pi +\gamma P^\pi v, \label{def: Tpi} \\
\forall v,~  T v & =  \max_\pi T^\pi v, \\
\forall v,~\G(v)&= \{\pi : T^\pi v = T v\}, \label{def: greedy policy}
\end{align}
where $T^\pi$ is a linear operator, $T$ is the optimal Bellman operator and both $T^\pi$ and $T$ are $\gamma$-contraction mappings w.r.t. the max norm. It is known that the unique fixed points of $T^\pi$ and $T$ are $v^\pi$ and $v^*$, respectively. The set $\G(v)$ is the standard set of 1-step greedy policies w.r.t. $v$. 
Furthermore, given $v^*$, the set $\G(v^*)$ coincides with that of stationary optimal policies. In other words, every policy that is 1-step greedy w.r.t. $v^*$ is optimal and vice versa.


The most known variants of PI are Modified-PI \cite{puterman1978modified} and $\lambda$-PI \cite{lpi}. In both, the evaluation stage of PI is relaxed by performing partial-evaluation, instead of the full policy evaluation. In this work, we will generalize algorithms using both of these approaches.  
Modified PI performs partial evaluation using the $m$-return, $(T^{\pi})^m v$, where $\lambda$-PI uses the $\lambda$-return, $T_\lambda^\pi v$, with $\lambda\in[0,1]$. This operator has the following equivalent forms (see e.g.~\cite{alpi}, p.1182),
\begin{align}
T_\lambda^\pi v &\eqdef (1-\lambda) \sum_{j=0}^\infty \lambda^j (T^\pi)^{j+1} v \label{def: T lambda pi}\\
&= v+ (I-\gamma\lambda P^\pi)^{-1}(T^\pi v - v). \nonumber
\end{align}
These operators correspond to the ones used in the famous TD($n$) and TD($\lambda$) \cite{sutton1998reinforcement},
\begin{align*}
&(T^{\pi})^m v =  \condE{\bigcdot}^{\pi}\left[\sum_{t=0}^{m-1}\gamma^t r(s_t,\pi_t(s_t))+\gamma^m v(s_h)\right],\\
& T_\lambda^\pi v = v + \condE{\bigcdot}^{\pi}\left[\sum_{t=0}^{\infty}(\gamma\lambda)^t (r_t+\gamma v_{t+1}-v_t)\right].
\end{align*}

\section{The $h$-Greedy Policy and $h$-PI}
Let $h\in \mathbb{N} \backslash \{0\}$. An $h$-greedy policy \cite{bertsekas1995neuro,beyond2018efroni} $\pi_h$ outputs the first optimal action out of the sequence of actions solving a non-stationary, $h$-horizon control problem as follows: 
\begin{align}
  &\arg\max_{\pi_0} \max_{\pi_1,..,\pi_{h-1}}  \condE{\bigcdot}^{\pi_0\dots\pi_{h-1}}\left[\sum_{t=0}^{h-1}\gamma^t r(s_t,\pi_t(s_t))+\gamma^h v(s_h)\right] \nonumber\\
  & = \arg\max_{\pi_0} \condE{\bigcdot}^{\pi_0}\left[r(s_0,\pi_0(s_0))+\gamma \left(T^{h-1} v\right)(s_1) \right],  \label{eq_h_greedy_def}
\end{align}
where the notation $\condE{\bigcdot}^{\pi_0\dots\pi_{h-1}}$ corresponds to conditioning on the trajectory induced by the choice of actions $(\pi_0(s_0),\pi_1(s_1),\dots, \pi_{h-1}(s_{h-1}))$ and a starting state $s_0=\bigcdot$.

 As the equality in \eqref{eq_h_greedy_def} suggests that $\pi_h$ can be interpreted as a 1-step greedy policy w.r.t. $T^{h-1}v$. We denote the set of $h$-greedy polices w.r.t $v$ as $\G_h(v)$ and is defined by
 \begin{align*}
\forall v,~\G_h(v)&= \{\pi : T^\pi T^{h-1} v = T^h v\}.
 \end{align*}
 This generalizes the definition of the 1-step greedy set of policies, generalizing, \eqref{def: greedy policy}, and coincides with it for $h=1$.
\begin{figure}
\centering
\def\svgwidth{5.3cm}
\begingroup%
  \makeatletter%
  \providecommand\color[2][]{%
    \errmessage{(Inkscape) Color is used for the text in Inkscape, but the package 'color.sty' is not loaded}%
    \renewcommand\color[2][]{}%
  }%
  \providecommand\transparent[1]{%
    \errmessage{(Inkscape) Transparency is used (non-zero) for the text in Inkscape, but the package 'transparent.sty' is not loaded}%
    \renewcommand\transparent[1]{}%
  }%
  \providecommand\rotatebox[2]{#2}%
  \newcommand*\fsize{\dimexpr\f@size pt\relax}%
  \newcommand*\lineheight[1]{\fontsize{\fsize}{#1\fsize}\selectfont}%
  \ifx\svgwidth\undefined%
    \setlength{\unitlength}{588.8452581bp}%
    \ifx\svgscale\undefined%
      \relax%
    \else%
      \setlength{\unitlength}{\unitlength * \real{\svgscale}}%
    \fi%
  \else%
    \setlength{\unitlength}{\svgwidth}%
  \fi%
  \global\let\svgwidth\undefined%
  \global\let\svgscale\undefined%
  \makeatother%
  \begin{picture}(1,0.68399994)%
    \lineheight{1}%
    \setlength\tabcolsep{0pt}%
    \put(0.1744594,0.5613301){\color[rgb]{0,0,0}\makebox(0,0)[lt]{\lineheight{1.25}\smash{\begin{tabular}[t]{l}$r_{t=0}$\end{tabular}}}}%
    \put(0.044544,0.35421169){\color[rgb]{0,0,0}\makebox(0,0)[lt]{\lineheight{1.25}\smash{\begin{tabular}[t]{l}$\gamma r_{t=1}$\end{tabular}}}}%
    \put(-0.00385586,0.13569517){\color[rgb]{0,0,0}\makebox(0,0)[lt]{\lineheight{1.25}\smash{\begin{tabular}[t]{l}$\gamma^2 v_{t=2}$\end{tabular}}}}%
    \put(0,0){\includegraphics[width=\unitlength,page=1]{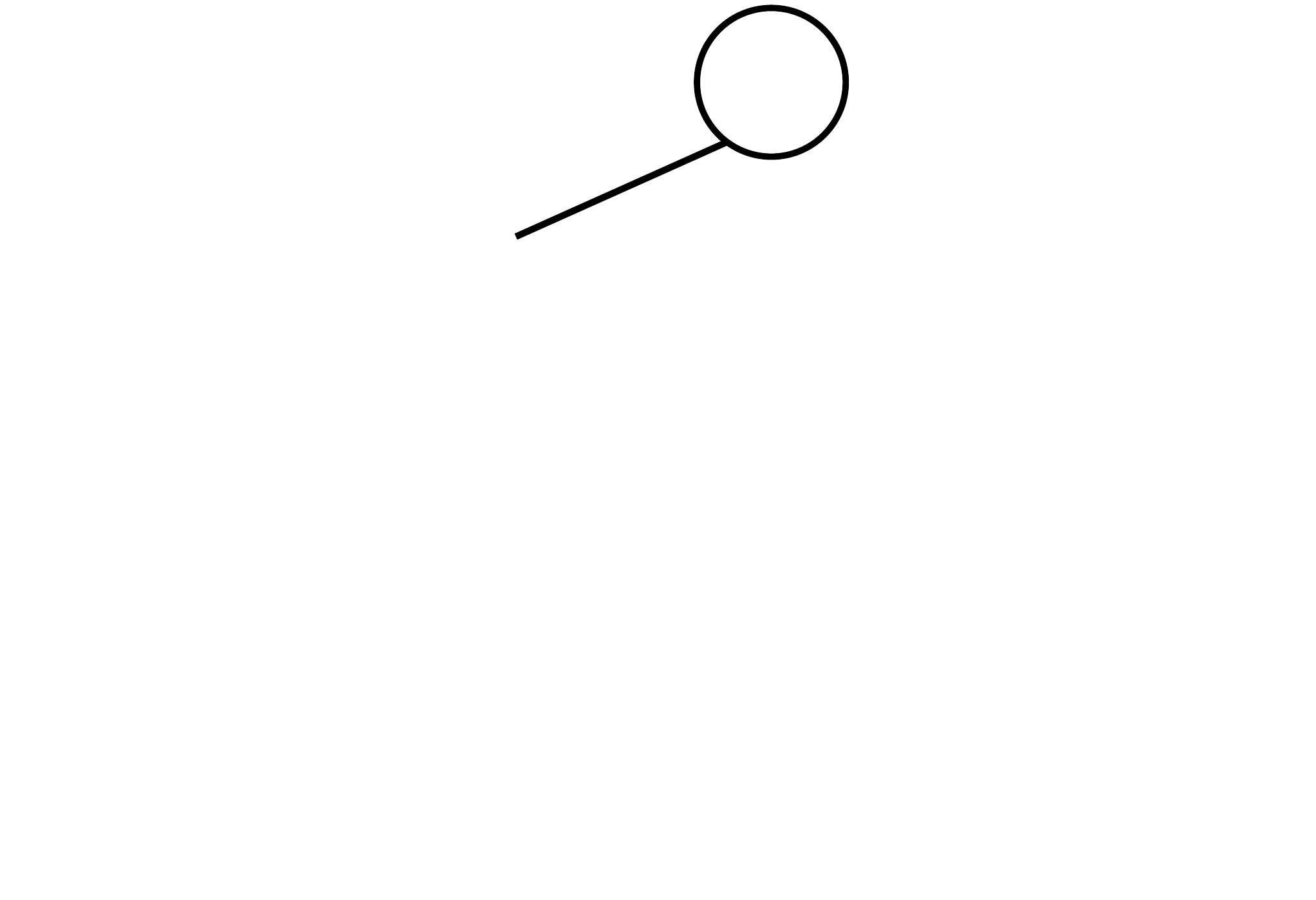}}%
    \put(0.57293138,0.60758265){\color[rgb]{0,0,0}\makebox(0,0)[lt]{\lineheight{1.25}\smash{\begin{tabular}[t]{l}$s$\end{tabular}}}}%
    \put(0,0){\includegraphics[width=\unitlength,page=2]{hTree3.pdf}}%
    \put(0.79132045,0.44476297){\color[rgb]{0,0,0}\makebox(0,0)[lt]{\lineheight{1.25}\smash{\begin{tabular}[t]{l}$s_r$\end{tabular}}}}%
    \put(0.33663013,0.44341203){\color[rgb]{0,0,0}\makebox(0,0)[lt]{\lineheight{1.25}\smash{\begin{tabular}[t]{l}$s_l$\end{tabular}}}}%
  \end{picture}%
\endgroup%

\caption{Obtaining the $h$-greedy policy with a tree-search also outputs $T^{\pi_h}T^{h-1}v$ and $T^{h-1}v$. In this example, the red arrow depicts the $h$-greedy policy. The value at the root's child node $s_l$ is $T^{h-1}v(s_l),$ which corresponds to the optimal blue trajectory starting at $s_l$. The same holds for $s_r$. }
\label{fig: tree search}
\end{figure} 
 
\begin{remark}\label{remark: h optimal value}
The $h$-greedy policy can be obtained by solving the above formulation with DP in linear time (in $h$). Other than returning the policy, the last and one-before-last iterations also return $T^{\pi_h}T^{h-1}v$ and $T^{h-1}v,$ respectively. Another, conceptually similar option would be using Model Predictive Control to solve the planning problem and again retrieve the above values of interest \cite{negenborn2005learning,tamar2017learning}. Given a `nice' mathematical structure, this can be done efficiently. When the model is unknown, finding $\pi_h$ together with  $T^{\pi_h}T^{h-1}v$ and $T^{h-1}v$ is possible with model-free approaches such as Q-learning \cite{jin2018q}. Alternatively, $\pi_h(s)$ can be retrieved using a tree-search of depth $h$, starting at root $s$ (see Figure \ref{fig: tree search}). The search again returns $T^{\pi_h}T^{h-1}v$  and $T^{h-1}v$ ``for free'' as the values at the root and its descendant nodes.  While the tree-search complexity in general is exponential in $h$, sampling can be used. Examples for such sampling-based tree-search methods are MCTS \cite{browne2012survey} and Optimistic Tree Exploration \cite{munosbook14}.
\end{remark} 

\begin{algorithm}
	\caption{$h$-PI}
	\label{alg:hPI}
	\begin{algorithmic}
		\STATE {\bf Initialize:} $h \in \mathbb{N} \setminus \{0\},~v_0=v^{\pi_0} \in \mathbb{R}^{|\S|}$
		\WHILE{$v_k$ changes}
		\STATE $\pi_{k} \gets \pi \in \G_h(v)$
		\STATE $v_{k+1} \gets v^{\pi_k}$
		\STATE $~~ k ~~ ~\gets k+1$
		\ENDWHILE
		\STATE {\bf Return $\pi,v$}
	\end{algorithmic}
\end{algorithm}

As was discussed in \cite{bertsekas1995neuro,beyond2018efroni}, one can use the $h$-greedy policy to derive a policy-iteration procedure called $h$-PI (see Algorithm \ref{alg:hPI}). In it, the 1-step greedy policy from PI is replaced with the $h$-greedy policy. This algorithm iteratively calculates an $h$-step greedy policy with respect to $v$, and then performs a complete evaluation of this policy. Convergence is guaranteed after $O(h^{-1})$ iterations \cite{beyond2018efroni}.

\section{$h$-Greedy Consistency}
The $h$-greedy policy w.r.t $v^\pi$ is strictly better than $\pi$, i.e., $v^{\pi_h}\geq v^\pi$ \cite{bertsekas1995neuro,beyond2018efroni}. Using this property for proving convergence of an algorithm requires the algorithm to perform exact value estimation, which can be a hard task. Instead, in this work, we replace the less practical exact evaluation with partial evaluation; this comes with the price of more challenging analysis. Tackling this more intricate setup, we identify a key property required for the analysis to hold. We refer to it as \emph{$h$-greedy consistency}. It will be central to all proofs in this work.
\begin{defn}\label{defn: h consistent}
A pair of value function and policy $(v,\pi)$ is $h$-greedy consistent if $T^{\pi}T^{h-1} v \geq T^{h-1}v$.
\end{defn}

In words, $(v,\pi)$ is $h$-greedy consistent if $\pi$ `improves', component-wise, the value $T^{h-1}v$. 
Since relaxing the evaluation stage comes with the $h$-greedy consistency requirement, the following question arises: while dispatching an algorithm, what is the price ensuring $h$-greedy consistency per each iteration? As we will see in the coming sections, it is enough to ensure $h$-greedy consistency only for the first iteration of our algorithms. For the rest of the iterations it holds by construction and is shown to be guaranteed in our proofs. Thus, by only initializing to an $h$-greedy consistent $(v_0,\pi_0)$, we enable guaranteeing the convergence of an algorithm that performs partial evaluation instead of exact in each its iterations. Ensuring consistency for the first iteration is straightforward, as is explained in the following remark.


\begin{remark}\label{remark: h greedy consistancy is easier}
Choosing $(v,\pi)$ which is $h$-greedy consistent can be done, e.g., by choosing $v = \frac{R_{\min}}{1-\gamma}$ (i.e., set every entrance of $v$ to the minimal possible accumulated reward) and ${\pi=\pi_h\in \G_h(v).}$ Furthermore, for any value-policy, $(\bar{v},\pi)$, that is not $h$-greedy consistent, let 
$$\Delta= \frac{ \max_s \left(T^{h-1}\bar{v} -T^{\pi}T^{h-1}\bar{v}\right)(s)}{\gamma^{h-1}(1-\gamma)}>0,$$
and set ${v = \bar{v}- \Delta}$. Then, $(v,\pi)$ is $h$-greedy consistent. This is a generalization to the construction given for $h=1$ (see \cite{bertsekas1995neuro}, p. 46).
\end{remark}

$h$-greedy consistency is an $h$-step generalization of a notion already introduced in previous works on 1-step-based PI schemes with partial evaluation. The latter are known as `optimistic' PI schemes and include Modified PI and $\lambda$-PI \cite{bertsekas1995neuro}. There, the initial value-policy pair is assumed to be 1-greedy consistent, i.e. $T^{\pi_1}v_0 \geq v_0$, e.g., \cite{bertsekas1995neuro}, p. 32 and 45, \cite{bertsekas2011approximate}, p. 3, \cite{puterman1978modified}[Theorem 2]. This property served as an assumption on the pair $(v_0,\pi_1)$. 



To further motivate our interest in Definition~\ref{defn: h consistent}, in the rest of the section we give two results that would be used in proofs later but are also insightful on their own. The following lemma gives that $h$-greedy consistency implies a sequence of value-function partial evaluation relations (see proof in Appendix~\ref{supp: help value improvement}).
\begin{lemma}\label{lemma: help value improvement}
Let $(v,\pi)$ be $h$-greedy consistent. Then,
\begin{align*}
T^\pi T^{h-1}v \leq \dotsm \leq (T^{\pi})^l T^{h-1}v \leq  \dotsm \leq v^\pi.
\end{align*}
\end{lemma}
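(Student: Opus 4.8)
The plan is to reduce the statement to the familiar monotone-convergence argument for the linear operator $T^\pi$. First I would abbreviate $w \eqdef T^{h-1}v$, so that $h$-greedy consistency of $(v,\pi)$ becomes the one-line condition $T^\pi w \ge w$, and the chain to be proved reads $T^\pi w \le (T^\pi)^2 w \le \dots \le (T^\pi)^l w \le \dots \le v^\pi$. This substitution is really the whole point: it collapses the seemingly $h$-dependent hypothesis to exactly the 1-step ``optimistic'' condition used in the Modified-PI / $\lambda$-PI literature referenced after Definition~\ref{defn: h consistent}.

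Next I would invoke monotonicity of $T^\pi$: since $T^\pi x = r^\pi + \gamma P^\pi x$ and $P^\pi$ has nonnegative entries, $x \le y$ implies $T^\pi x \le T^\pi y$ componentwise. Applying $T^\pi$ to both sides of $w \le T^\pi w$ and iterating gives, by a one-line induction on $l$, that $(T^\pi)^l w \le (T^\pi)^{l+1} w$ for every $l \ge 0$; hence $\{(T^\pi)^l w\}_{l \ge 1}$ is a componentwise non-decreasing sequence, which is precisely the left portion of the chain.

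Finally I would use that $T^\pi$ is a $\gamma$-contraction in the max norm whose unique fixed point is $v^\pi$ (recalled in the Preliminaries), so $(T^\pi)^l w \to v^\pi$ as $l \to \infty$. A non-decreasing sequence that converges to $v^\pi$ lies below its limit at every term, giving $(T^\pi)^l w \le v^\pi$ for all $l$ and completing the chain.

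There is no serious obstacle here; the only thing worth flagging is that the hypothesis $T^\pi w \ge w$ is not optional — it is exactly what seeds the induction, and without it the iterates $(T^\pi)^l w$ need not be monotone in $l$ at all. This is why $h$-greedy consistency is imposed as the standing assumption on the initial pair throughout the rest of the paper.
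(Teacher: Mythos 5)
Your proof is correct and follows essentially the same route as the paper's: seed the chain with the $h$-greedy consistency inequality, propagate it by monotonicity of $T^\pi$, and bound the resulting non-decreasing sequence by its limit $v^\pi$ via the contraction/fixed-point property. The substitution $w = T^{h-1}v$ is a harmless cosmetic simplification.
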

The result shows that $v^\pi$ is strictly bigger than $T^{h-1}v$. This property holds when $v=v^{\pi'}$, i.e., when $v$ is an exact value of some policy and was central in the analysis of $h$-PI \cite{beyond2018efroni}. However, as Lemma \ref{lemma: help value improvement} suggests, we only need $h$-greedy consistency, which is easier to have than estimating the exact value of a policy (see Remark \ref{remark: h greedy consistancy is easier}).



The next result shows that if $\pi$ is taken to be the $h$-greedy policy, using partial evaluation results in a $\gamma^h$ contraction toward the optimal value $v^*$ (see proof in Appendix~\ref{supp: help value improvement2}).

\begin{proposition}\label{proposition: help value improvement2}
Let $v$ and $\pi_h\in \G_h(v)$ be s.t. $(v,\pi_h)$ is $h$-greedy consistent. Then, for any $m\geq 1 \mbox{ and } \lambda\in[0,1],$
\begin{align*}
&||v^*-(T^{\pi_h})^m T^{h-1} v ||_\infty \leq \gamma^h ||v^*-v ||_\infty \mbox{ and }\\
&||v^*-T^{\pi_h}_\lambda T^{h-1} v ||_\infty \leq \gamma^h ||v^*-v ||_\infty.
\end{align*}
\end{proposition}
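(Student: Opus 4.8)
The plan is to sandwich $(T^{\pi_h})^m T^{h-1}v$ (and its $\lambda$-analogue) between $T^h v$ from below and $v^*$ from above, and then exploit that $T^h$ is a $\gamma^h$-contraction with fixed point $v^*$.

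\textbf{Upper bound.} Since $(v,\pi_h)$ is $h$-greedy consistent, Lemma~\ref{lemma: help value improvement} gives the chain $T^{\pi_h}T^{h-1}v \le \dots \le (T^{\pi_h})^l T^{h-1}v \le \dots \le v^{\pi_h}$. Together with $v^{\pi_h}\le v^*$, this immediately yields, for every $m\ge 1$, that $(T^{\pi_h})^m T^{h-1}v \le v^*$, i.e. $v^* - (T^{\pi_h})^m T^{h-1}v \ge 0$ component-wise. For the $\lambda$-return I would instead use the series form $T^{\pi_h}_\lambda T^{h-1}v = (1-\lambda)\sum_{j\ge 0}\lambda^j (T^{\pi_h})^{j+1}T^{h-1}v$ from \eqref{def: T lambda pi} and observe that this is a convex combination of the terms appearing in that chain; hence it too lies between $T^{\pi_h}T^{h-1}v$ and $v^{\pi_h}\le v^*$.

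\textbf{Lower bound and contraction.} Because $\pi_h\in\G_h(v)$, we have $T^{\pi_h}T^{h-1}v = T^h v$, which is the smallest element of the above chain (and of the convex combination). Hence both $(T^{\pi_h})^m T^{h-1}v$ (for $m\ge 1$) and $T^{\pi_h}_\lambda T^{h-1}v$ are $\ge T^h v$, so $v^* - (T^{\pi_h})^m T^{h-1}v \le v^* - T^h v$ and likewise for the $\lambda$-version. Finally, since $T$ is a $\gamma$-contraction in $\|\cdot\|_\infty$ with fixed point $v^*$, $T^h$ is a $\gamma^h$-contraction, so $0 \le v^* - T^h v = T^h v^* - T^h v \le \gamma^h\|v^*-v\|_\infty$ component-wise. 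Chaining the three bounds gives $0\le v^* - (T^{\pi_h})^m T^{h-1}v \le \gamma^h\|v^*-v\|_\infty$, and identically for $T^{\pi_h}_\lambda T^{h-1}v$, which is exactly the claim.

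\textbf{Main obstacle.} There is no deep difficulty here; the care needed is (i) recognizing that $h$-greedy consistency is precisely what licenses the chain in Lemma~\ref{lemma: help value improvement} — without it $(T^{\pi_h})^m T^{h-1}v$ need be neither monotone in $m$ nor bounded by $v^{\pi_h}$, and the argument collapses — and (ii) handling the $\lambda$-return through its convex-combination representation so that the same two-sided sandwich holds uniformly in $\lambda\in[0,1]$, with the endpoints $\lambda=0$ (giving $T^h v$) and $\lambda=1$ (giving $v^{\pi_h}$) as the extreme cases.
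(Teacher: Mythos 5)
Your proof is correct and takes essentially the same route as the paper's: $h$-greedy consistency licenses the chain of Lemma~\ref{lemma: help value improvement}, which simultaneously gives the sandwich $T^h v = T^{\pi_h}T^{h-1}v \le (T^{\pi_h})^m T^{h-1}v \le v^{\pi_h} \le v^*$ (and its convex-combination analogue for $T^{\pi_h}_\lambda$), after which the $\gamma^h$-contraction of $T^h$ toward $v^*$ closes the argument. The only cosmetic difference is that the paper realizes that contraction explicitly via $v^* - T^h v \le (T^{\pi_*})^h v^* - (T^{\pi_*})^h v = \gamma^h (P^{\pi_*})^h(v^*-v)$ rather than invoking the contraction property of $T$ directly.
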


In \cite{beyond2018efroni}[Lemma 2], a similar contraction property was proved and played a central role in the analysis of the corresponding $h$-PI algorithm. Again, there, the requirement was $v=v^{\pi'}.$ Instead, the above result requires a weaker condition: $h$-greedy consistency of $(v,\pi_h)$. 

%

\section{The $h$-Greedy Policy Alone is Not Sufficient For Partial Evaluation}\label{sec: wrong hmPI hlambdaPI}
A more practical version of $h$-PI (Algorithm~\ref{alg:hPI}) would involve the $m$- or $\lambda$-return w.r.t.  $v_{k}$ instead of the exact value. This would correspond to the update rules: 
\begin{align}
	& \pi_{k} \gets  \arg\max_{\pi} T^{\pi}T^{h-1}v_k, \label{eq: bad alg policy upadte}\\
	& v_{k+1} \gets  (T^{\pi_k})^m v_k \mbox{ or }  v_{k+1} \gets  T^{\pi_k}_\lambda v_k \label{eq: bad alg value upadte}.
\end{align}
Indeed, this would relax the evaluation task to an easier task than full policy evaluation.

The next theorem suggests that for $\pi_h \in \G_h(v),$ even if $(v,\pi_h)$ is $h$-greedy consistent, the procedure \eqref{eq: bad alg policy upadte}-\eqref{eq: bad alg value upadte}  does not necessarily contract toward the optimal policy, unlike the form of update in Proposition \ref{proposition: help value improvement2}. To see that, note that both $\gamma^m + \gamma^h$ and $\frac{\gamma(1-\lambda)}{1-\lambda\gamma} + \gamma^h$ can be larger than 1. 
\begin{theorem}\label{thm: contraction coefficient}
Let $h>1,m\geq 1,\mbox{ and }\lambda\in [0,1]$. Let $v$ be a value function and $\pi_h\in \G_h(v)$ s.t.  $(v,\pi_h)$ is $h$-greedy consistent (see Definition~\ref{defn: h consistent}). Then,
\begin{align}
&||v^* - (T^{\pi_h})^m v||_\infty  \le (\gamma^m+\gamma^h)|| v^* - v ||_\infty, \label{eq: non-contraction m}\\
&||v^* - T^{\pi_h}_\lambda v||_\infty \le \left(\frac{\gamma(1-\lambda)}{1-\lambda\gamma} + \gamma^h \right) || v^* - v ||_\infty. \label{eq: non-contraction lambda}
\end{align}
Additionally, there exist a $\gamma$-discounted MDP, value function $v,$ and policy $\pi_h\in \G_h(v)$ s.t. $(v,\pi_h)$ is $h$-greedy consistent, for which \eqref{eq: non-contraction m} and \eqref{eq: non-contraction lambda} hold with equality.
\end{theorem}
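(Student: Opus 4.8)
The bound splits cleanly into two halves: the inequalities \eqref{eq: non-contraction m}--\eqref{eq: non-contraction lambda}, which I would get by combining Proposition~\ref{proposition: help value improvement2} with a one-sided comparison argument, and the tightness claim, for which I would build an explicit deterministic MDP. Write $\epsilon = || v^* - v ||_\infty$ and let $\mathbf 1$ be the all-ones vector, so $v^* - \epsilon\mathbf 1 \le v \le v^* + \epsilon\mathbf 1$ component-wise. The first observation I would record is that $h$-greedy consistency forces $T^{h-1}v \le v^*$: since $\pi_h \in \G_h(v)$ gives $T^{\pi_h}T^{h-1}v = T^h v$, Definition~\ref{defn: h consistent} reads $T^h v \ge T^{h-1}v$, and applying $T$ repeatedly (monotonicity) together with $T^n v \to v^*$ yields $v^* \ge T^{h-1}v$; this is also immediate from Lemma~\ref{lemma: help value improvement} since $T^{h-1}v \le v^{\pi_h} \le v^*$. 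Hence $v \ge v^*-\epsilon\mathbf 1 \ge T^{h-1}v - \epsilon\mathbf 1$. Using that $(T^{\pi_h})^m$ is $\gamma^m$-contracting and shifts constants by $\gamma^m$, I then get
\[
(T^{\pi_h})^m v \;\ge\; (T^{\pi_h})^m\big(T^{h-1}v - \epsilon\mathbf 1\big) \;=\; (T^{\pi_h})^m T^{h-1}v - \gamma^m\epsilon\mathbf 1 \;\ge\; v^* - (\gamma^m+\gamma^h)\epsilon\,\mathbf 1,
\]
the last step being Proposition~\ref{proposition: help value improvement2}. For the reverse one-sided bound, monotonicity plus $(T^{\pi_h})^m v^* \le v^*$ (which follows from $T^{\pi_h}v^* \le T v^* = v^*$ and induction) give $(T^{\pi_h})^m v \le (T^{\pi_h})^m(v^*+\epsilon\mathbf 1) = (T^{\pi_h})^m v^* + \gamma^m\epsilon\mathbf 1 \le v^* + \gamma^m\epsilon\mathbf 1 \le v^* + (\gamma^m+\gamma^h)\epsilon\mathbf 1$. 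Taking the max norm yields \eqref{eq: non-contraction m}.

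The $\lambda$-return case is identical once I substitute the two standard facts about $T^\pi_\lambda$ read off from \eqref{def: T lambda pi}: it is a $\tfrac{\gamma(1-\lambda)}{1-\gamma\lambda}$-contraction in $||\cdot||_\infty$, and $T^\pi_\lambda(v+c\mathbf 1) = T^\pi_\lambda v + \tfrac{\gamma(1-\lambda)}{1-\gamma\lambda}c\mathbf 1$; together with $T^{\pi_h}_\lambda v^* \le v^*$, which holds because $T^{\pi_h}v^* - v^* \le 0$ and $(I-\gamma\lambda P^{\pi_h})^{-1}$ has nonnegative entries. Replacing $\gamma^m$ by $\tfrac{\gamma(1-\lambda)}{1-\gamma\lambda}$ throughout the previous paragraph and invoking the $\lambda$-part of Proposition~\ref{proposition: help value improvement2} gives \eqref{eq: non-contraction lambda}.

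For the equality claim I would exhibit a small deterministic MDP reproducing exactly the failure mode the paper is about. The target is a root state $s_0$ where the depth-$h$ lookahead w.r.t.\ $v$ commits to a first action that is truly suboptimal by precisely $\gamma^h\epsilon$ (making the bound of Proposition~\ref{proposition: help value improvement2} tight at $s_0$), while along the ensuing $\pi_h$-trajectory $v$ sits exactly $\epsilon$ below $v^*$ at the state(s) where the partial-evaluation operator bootstraps $v$ itself (making the extra $\gamma^m$, resp.\ $\tfrac{\gamma(1-\lambda)}{1-\gamma\lambda}$, loss tight); reading off $(T^{\pi_h})^m v(s_0)$ and $T^{\pi_h}_\lambda v(s_0)$ then gives equality in both \eqref{eq: non-contraction m} and \eqref{eq: non-contraction lambda}. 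Concretely one wants $s_0$ with two actions, one going to a branch whose $(h-1)$-horizon optimal value under $v$ still equals $v^*$ and one (the truly optimal action) going to a branch the $(h-1)$-lookahead undervalues by the maximal $\gamma^{h-1}\epsilon$; tune the two rewards at $s_0$ so the lookahead values tie and break the tie toward the suboptimal action; and set $v = v^*-\epsilon$ on all states visited afterwards. Then verify $h$-greedy consistency of $(v,\pi_h)$ directly, and check no other state exceeds the claimed gap. I expect the construction itself to be the main obstacle: the requirements --- $\pi_h\in\G_h(v)$ selecting the ``bad'' action, $(v,\pi_h)$ being $h$-greedy consistent, $T^{h-1}v$ still equal to $v^*$ at the bootstrap states even though $v$ is $\epsilon$ below it there, and tightness holding simultaneously for every $m\ge 1$ and $\lambda\in[0,1]$ --- are in tension, and reconciling them seems to require deliberate ties in the lookahead subproblems broken adversarially and a state space that grows with $h$, so that the leaves certifying $T^{h-1}v = v^*$ are distinct from the states at which $v$ is bootstrapped.
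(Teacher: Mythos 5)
Your derivation of the two inequalities is correct and slightly different in route from the paper's: you sandwich $(T^{\pi_h})^m v$ between $(T^{\pi_h})^m(T^{h-1}v-\epsilon\mathbf 1)$ and $(T^{\pi_h})^m(v^*+\epsilon\mathbf 1)$, using the observation $T^{h-1}v\le v^*$ (a valid consequence of $h$-greedy consistency via Lemma~\ref{lemma: help value improvement}) and then invoking Proposition~\ref{proposition: help value improvement2} to absorb the $\gamma^h$ term. The paper instead telescopes through $v^{\pi_h}$, writing $v^*-(T^{\pi_h})^m v=(v^*-v^{\pi_h})+\gamma^m(P^{\pi_h})^m(v^{\pi_h}-v)$ and bounding each piece; the two arguments are of comparable length and both lean on the same consistency machinery, so this half is fine. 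Your treatment of the $\lambda$ case via the contraction modulus $\tfrac{\gamma(1-\lambda)}{1-\gamma\lambda}$ and the constant-shift identity is also correct (the paper instead expands $T^{\pi_h}_\lambda$ as a convex combination of $(T^{\pi_h})^{i+1}$ and averages the $m$-case bound, which amounts to the same thing).

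The genuine gap is the tightness claim: you describe the shape of the required example but never produce it, and you explicitly flag the construction as the main obstacle. Since the equality statement is the substantive content of the theorem (it is what shows the naive backup can fail to contract), this half of the proof is missing. Moreover, your guess that the state space must grow with $h$ is wrong: the paper achieves equality for all $h>1$, $m\ge 1$, $\lambda\in[0,1]$ simultaneously with a fixed four-state deterministic MDP in which only one reward, $\tfrac{1-\gamma^h}{1-\gamma}$ on the edge $s_0\to s_1$, depends on $h$. The mechanism is exactly the one you intuit --- a tie at the root between a truly optimal action (`up', worth $\tfrac{1}{1-\gamma}$) and a bad one (`right'), broken adversarially, plus a second tie at $s_1$ between `stay' and an escape to a zero-valued state $s_2$ --- but the leaf certifying $T^{h-1}v(s_1)=0$ is simply the single extra state $s_2$, reachable in one step, not a tree of depth $h$. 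With $v(s_1)=-\tfrac{1}{1-\gamma}$ and $v=0$ elsewhere, the $m$-step return from $s_0$ under the adversarial $\pi_h$ evaluates to $\tfrac{1-\gamma^m-\gamma^h}{1-\gamma}$ against $v^*(s_0)=\tfrac{1}{1-\gamma}$, giving equality directly; the $\lambda$ computation is analogous. You should verify $h$-greedy consistency of $(v,\pi_h)$ for that concrete pair and confirm the sup-norm is attained at $s_0$, neither of which is difficult once the MDP is written down, but without an explicit instance the second statement of the theorem remains unproved.
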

The proof of the first statement is given in Appendix~\ref{supp: contraction coefficicent}, and the proof of the second statement is as follows.
\begin{proof}[Proof of second statement in Theorem~\ref{thm: contraction coefficient}]
	We prove this by constructing an example. Fix $h>1$ and consider the corresponding 4-state MDP in Figure \ref{fig: hm doesnt contract}. Let $v$ be
	$
	v(s_0)=v(s_2)=v(s_3)=0,\ v(s_1)=-\frac{1}{1-\gamma}.
	$
	Also, let $\pi_h\in \G_h(v)$. For this choice, observe that ${T^{h-1}v \leq T^{\pi_h}T^{h-1} v}$, i.e., $(v,\pi_h)$ is $h$-greedy consistent.
	The optimal policy from state $s_{0}$ is to choose the action `up'. Thus, it is easy to see that, ${v^*(s_{0})=v^*(s_{3})=\frac{1}{1-\gamma}}$, and in the remaining of states it is easy to observe that ${v^*(s_1)=v^*(s_2)=0}$.
	
	Now, see that for any $h>1$
	$
	\left(T^{h-1}v\right)(s_1)=\left(T^{h-1}v\right)(s_2)=0,
	\left(T^{h-1}v\right)(s_3)=\frac{1-\gamma^{h-1}}{1-\gamma}.
	$
	Thus, the $h$-greedy policy (by using \eqref{eq_h_greedy_def}) is contained in the following set of actions
	$
	\pi_{h}(s_{0})\in \{ \mathrm{right,up}\},\pi_{h}(s_{1})\in \{ \mathrm{right,stay}\},
	\pi_{h}(s_{2}), \pi_{h}(s_{3}) \in \{ \mathrm{stay}\}.
	$
	For example, we see that taking the action `stay' or `right' from state $s_{1}$ and then obtain $T^{h-1}v$ have equal value:
	\begin{align*}
	&r(s_1,\mathrm{`stay'})+\gamma (T^{h-1}v)(s_1) \\
	= &r(s_1,\mathrm{`right'})+\gamma (T^{h-1}v)(s_2)=0.
	\end{align*}
	Let us choose an $h$-greedy policy, $\pi_h$, of the form:
	$
	\pi_{h}(s_{0})=  \mathrm{right},\pi_{h}(s_{1})=  \mathrm{stay},\pi_{h}(s_{2})=  \mathrm{stay}.
	$
	Thus, from state $s_0$, the $m$-return has the value
	\begin{align*}
	&\left( (T^{\pi_{h}})^m v \right) (s_{0}) = \sum_{i=0}^{m-1} \gamma^t r(s_i,\pi_h(s_i)) + \gamma^m v(s_{i=m}) \\
	&=\frac{1-\gamma^m-\gamma^h}{1-\gamma}+\sum_{i=1}^{m-1} \gamma^i\cdot 0 + \gamma^m \left(-\frac{1}{1-\gamma} \right)\\
	& = \frac{1-\gamma^m-\gamma^h}{1-\gamma}
	\end{align*}
	We thus have that
	\begin{align}
	&|| v^*-(T^{\pi_h})^m v ||_\infty = \left | v^*(s_{1,0})-(T^{\pi_{h}})^m v(s_{1,0})  \right| \nonumber\\
	&=\frac{1}{1-\gamma} + \frac{\gamma^m+\gamma^h-1}{1-\gamma}=(\gamma^m+\gamma^h)\frac{1}{1-\gamma}\label{eq: v_star and m return}
	\end{align}
	
	It is also easy to see that $|| v^*-v ||_\infty =\frac{1}{1-\gamma}$. By using~\eqref{eq: v_star and m return},
	\begin{align*}
	|| v^*-(T^{\pi_h})^m v ||_\infty = (\gamma^m+\gamma^h)|| v^*-v ||_\infty,
	\end{align*}
	which concludes the tightness result on the first result in Theorem \ref{thm: contraction coefficient}. The tightness proof of \eqref{eq: non-contraction lambda} easily follows using the same construction as above; for details see Appendix~\ref{supp: contraction coefficicent}.

\end{proof}
\begin{figure}
	\begin{center}
		\begin{tikzpicture}[->,>=stealth',shorten >=1pt,auto,node distance=2.25cm,
		semithick, state/.style={circle, draw, minimum size=0.8cm}]
		\tikzstyle{every state}=[thick]
		]
		
		\node[state] (S0) [label=below:{$v(s_{0})=0$}]{$s_0$};
		\node[state] (S1) [right of=S0,label=below:{$v(s_{1})=-\frac{1}{1-\gamma}$}] {$s_1$};
		\node[state] (S2) [right of=S1,label=below:{$v(s_{2})=0$}] {$s_2$};
		\node[state] (S3) [above of=S0,label=right:{$v(s_{3})=0$}] {$s_3$};

		\path (S0) edge              node {$\frac{1-\gamma^h}{1-\gamma}$} (S1)
		(S1) edge              node {$0$} (S2)
		(S1) edge   [loop above] node[pos=0.05,left]{} node {$0$}  (S1)
		(S3) edge   [loop above] node[pos=0.05,left]{} node {$1$}  (S3)
		(S0) edge   node {$1$}  (S3)
		(S2) edge   [loop above] node[pos=0.05,left]{} node {$0$}  (S2);
		\end{tikzpicture}
	\end{center}
	\caption{The MDP used in the proof of Theorem~\ref{thm: contraction coefficient}. NC-$hm$-PI and NC-$h\lambda$-PI may result in a new value that does not contract toward $v^*$.}
	\label{fig: hm doesnt contract}
\end{figure}
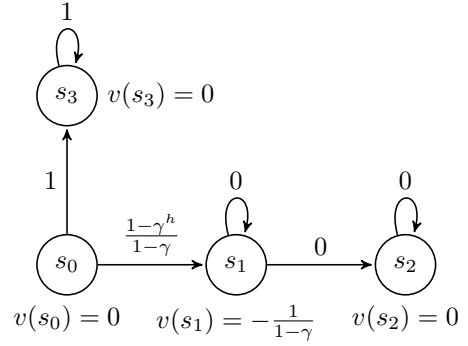

As discussed above, Theorem~\ref{thm: contraction coefficient} suggests that the `naive' partial-evaluation scheme would not necessarily lead to contraction toward the optimal value, especially for small values of $h,m,\lambda$ and large $\gamma$; these are often values of interest. Moreover, the second statement in the theorem contrasts with the known result for $h=1$, i.e., Modified PI and $\lambda$-PI. There, a $\gamma$-contraction was shown to exist \cite{alpi}[Proposition 8] and \cite{puterman1978modified}[Theorem 2]. 

From this point onwards, we shall refer to the algorithms given in \eqref{eq: bad alg policy upadte}-\eqref{eq: bad alg value upadte} and discussed in this section as Non-Contracting (NC)-$hm$-PI and NC-$h\lambda$-PI.



\section{Backup the Tree-Search Byproducts}\label{sec: hm-PI and hlambda-PI}

\begin{figure*}
\noindent\makebox[\textwidth][c]{
\begin{minipage}[t]{0.45\textwidth}
\centering
\begin{algorithm}[H]
	\caption{$hm$-PI}
	\label{alg:hmPI}
	\begin{algorithmic}
		\STATE {\bf Initialize:} $h,m \in \mathbb{N} \setminus \{0\},~v \in \mathbb{R}^{|\S|}$
		\WHILE{stopping criterion is false}
		\STATE $\pi_{k+1} \gets \pi\in \approxhgreedy{\delta_{k+1}}{v_k}$
		\STATE $v_{k+1} \gets (T^{\pi_{k+1}})^m T^{h-1}v_k + \epsilon_k$
		\STATE $~~ k ~~ ~\gets k+1$
		\ENDWHILE
		\STATE {\bf Return $\pi,v$}
	\end{algorithmic}
\end{algorithm}
\end{minipage}
\hspace{0.4cm}
\begin{minipage}[t]{0.45\textwidth}
\centering
 \begin{algorithm}[H]
	\caption{$h\lambda$-PI}
	\label{alg:hlambdaPI}
	\begin{algorithmic}
		\STATE {\bf Initialize:} $h \in \mathbb{N} \setminus \{0\},~\lambda\in[0,1],~v \in \mathbb{R}^{|\S|}$
		\WHILE{stopping criterion is false}
		\STATE $\pi_{k+1} \gets \pi\in \approxhgreedy{\delta_{k+1}}{v_k}$
		\STATE $v_{k+1} \gets T^{\pi_{k+1}}_\lambda T^{h-1}v_k + \epsilon_k$
		\STATE $~~ k ~~ ~\gets k+1$
		\ENDWHILE
		\STATE {\bf Return $\pi,v$}
	\end{algorithmic}
\end{algorithm}
\end{minipage}}
\end{figure*}

In the previous section, we proved that partial evaluation using the backed-up value function $v$, as given in \eqref{eq: bad alg policy upadte}-\eqref{eq: bad alg value upadte}, is not necessarily a process converging toward the optimal value. In this section, we propose a natural respective fix: back up the value $T^{h-1}v$ and perform the partial evaluation w.r.t. it. In the noise-free case this is motivated by Proposition~\ref{proposition: help value improvement2}, which reveals a $\gamma^h$-contraction per each PI iteration.


We now introduce two new algorithms that relax $h$-PI's (from Algorithm~\ref{alg:hPI}) exact policy evaluation stage to the more practical $m$- and $\lambda$-return partial evaluation. Notice that $hm$-PI can be interpreted as iteratively performing $h-1$ steps of Value Iteration and one step of Modified PI \cite{puterman1978modified}, whereas instead of the latter, $h\lambda$-PI performs one step of $\lambda$-PI \cite{lpi}.

Our algorithms also account for noisy updates in both the improvement and evaluation stages. For that purpose, we first define the following approximate improvement operator.
\begin{defn}
	For $\hat{\delta} \in \mathbb{R}^{|\mathcal{S}|}_+,$ let $\approxhgreedy{\hat{\delta}}{v}$ be the approximate {$h$-greedy} set of policies w.r.t. $v$ with error $\hat{\delta},$ s.t. for $\pi \in \approxhgreedy{\hat{\delta}}{v},$ 
	$T^\pi T^{h-1} v \geq T^hv - \hat{\delta}.$
\end{defn}
Additionally, the algorithms assume additive $\hat{\epsilon} \in \mathbb{R}^{|\mathcal{S}|}$ error in the evaluation stage. We call them $hm$-PI and $h\lambda$-PI and present them in Algorithms~\ref{alg:hmPI} and \ref{alg:hlambdaPI}. As opposed to the non-contracting update discussed in Section~\ref{sec: wrong hmPI hlambdaPI}, the evaluation stage in these algorithms uses $T^{h-1}v$. 

We now provide our main result, demonstrating a $\gamma^h$-contraction coefficient for both $hm$-PI and $h\lambda$-PI.

The proof technique builds upon the previously introduced \emph{invariance argument} (see \cite{beyond2018efroni}, proof of Theorem~9). This enables working with a more convenient, shifted noise sequence. Thereby, we construct a shifted noise sequence s.t. the value-policy pair $(v_k,\pi_{k+1})$ in each iteration is $h$-greedy consistent  (see Definition \ref{defn: h consistent}). We thus also eliminate the $h$-greedy consistency assumption on the initial $(v_0,\pi_1)$ pair, which appears in previous works (see Remark \ref{remark: h greedy consistancy is easier}). Specifically, we shift $v_0$ by $\Delta_0$; the latter quantifies how `far' $(v_0,\pi_1)$ is from being $h$-greedy consistent. Notice our bound explicitly depends on  $\Delta_0$.  The provided proof is simpler and shorter than in previous works (e.g. \cite{alpi}). We believe that the proof technique presented here can be used as a general `recipe' for proving newly-devised PI procedures that use partial evaluation with more ease.

%
\begin{theorem}\label{adp}
  Let $h,m \in \mathbb{N} \setminus \{0\}, \lambda\in[0,1]$. 
%
  For noise sequences $\{\epsilon_k\}$ and $\{\delta_k\}$, $\|\epsilon_k\|_\infty \le \epsilon$ and $\|\delta_k\|_\infty \le \delta.$ Let
  $$\Delta_0= \max \{0,\frac{ \max_s \left(T^{h-1}v_0 -T^{\pi_1}T^{h-1}v_0\right)(s)}{\gamma^{h-1}(1-\gamma)}\}.$$

  Then,
  \begin{align*}
  \|v^* - v^{\pi_{k+1}} \|_\infty  \le ~ & \gamma^{kh}|| v^*-(v_0-\Delta_0) ||_\infty  \\ &+ \frac{(2 \gamma^h \epsilon + \delta)(1-\gamma^{kh})}{(1-\gamma)(1-\gamma^h)}
  \end{align*}
  and hence $ \lim\sup_{k \to \infty} \|v^* - v^{\pi_k} \|_\infty  \le\! \frac{2 \gamma^h \epsilon + \delta}{(1-\gamma)(1-\gamma^h)}.$

\end{theorem}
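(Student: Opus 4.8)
The plan is an \emph{invariance argument}: I replace the actual trajectory of Algorithm~\ref{alg:hmPI} (resp.\ Algorithm~\ref{alg:hlambdaPI}) by a ``shadow'' trajectory that is $h$-greedy consistent at \emph{every} iteration --- hence amenable to Lemma~\ref{lemma: help value improvement} and to an approximate form of Proposition~\ref{proposition: help value improvement2} --- and then unroll a plain $\gamma^h$-contraction. Two elementary facts drive this. (a) A state-independent offset leaves the approximate greedy set invariant, $\approxhgreedy{\hat{\delta}}{v-c\mathbf{1}}=\approxhgreedy{\hat{\delta}}{v}$ for every scalar $c$, and passes through $(T^\pi)^mT^{h-1}$ and $T^\pi_\lambda T^{h-1}$ merely as a contraction scalar (by~\eqref{def: Tpi} and~\eqref{def: T lambda pi}); hence offsetting the value iterates changes neither the produced policy sequence nor the $\delta_k$'s. (b) By Lemma~\ref{lemma: help value improvement}, any iterate $(T^{\pi})^mT^{h-1}w$ (or $T^\pi_\lambda T^{h-1}w$) with $(w,\pi)$ consistent is \emph{sub-harmonic}, $T(\cdot)\ge(\cdot)$, since the chain $l\mapsto(T^\pi)^lT^{h-1}w$ is nondecreasing and the $\lambda$-iterate is a convex combination of its members.

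\textbf{Step 1 (building the shadow trajectory).} I construct $v_k'=v_k-C_k\mathbf{1}$ and a shifted noise $\tilde\epsilon_k=\epsilon_k-\Delta_{k+1}\mathbf{1}$ so that the shadow run is $h$-greedy consistent at each iteration. Take $C_0=\Delta_0$, so $v_0'=v_0-\Delta_0$ and $(v_0',\pi_1)$ is consistent, exactly as in Remark~\ref{remark: h greedy consistancy is easier}. Inductively, let $\Delta_{k+1}\ge0$ be the smallest offset making $(v_{k+1}',\pi_{k+2})$ consistent; it exists because shifting $v$ down only relaxes Definition~\ref{defn: h consistent}, it equals the quantity of Remark~\ref{remark: h greedy consistancy is easier} evaluated at the pre-shift iterate $(T^{\pi_{k+1}})^mT^{h-1}v_k'+\epsilon_k$, and it is well posed because by~(a) the policy $\pi_{k+2}$ may be chosen from the greedy set of that pre-shift iterate independently of $\Delta_{k+1}$. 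Using~(b) (the pre-shift iterate is sub-harmonic) and the $\delta_{k+2}$-greediness of $\pi_{k+2}$, one bounds $\Delta_{k+1}$ by a constant multiple of $(2\gamma^h\epsilon+\delta)/\bigl(\gamma^{h-1}(1-\gamma)\bigr)$; the $1/(1-\gamma)$ factor in the final bound comes from here (and from the sub-harmonicity slack below).

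\textbf{Steps 2--3 (contraction and unrolling).} Along the shadow trajectory, fix $k$; by construction $(v_k',\pi_{k+1})$ is consistent. Lemma~\ref{lemma: help value improvement}, applied to each power and averaged for the $\lambda$-case, gives $v_{k+1}'-\tilde\epsilon_k\le v^{\pi_{k+1}}\le v^*$. For the matching lower bound, the first inequality of Lemma~\ref{lemma: help value improvement} and the $\delta_{k+1}$-greediness of $\pi_{k+1}$ give $v_{k+1}'-\tilde\epsilon_k\ge T^{\pi_{k+1}}T^{h-1}v_k'\ge T^hv_k'-\delta_{k+1}\ge v^*-\gamma^h\|v^*-v_k'\|_\infty\mathbf{1}-\delta_{k+1}$, using that $T^h$ is a $\gamma^h$-contraction fixing $v^*$; hence
\[
\|v^*-v_{k+1}'\|_\infty\ \le\ \gamma^h\,\|v^*-v_k'\|_\infty+\delta+\|\tilde\epsilon_k\|_\infty ,
\]
the approximate-greedy, consistency-only analogue of Proposition~\ref{proposition: help value improvement2}. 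Separately, consistency and Lemma~\ref{lemma: help value improvement} give $v^{\pi_{k+1}}\ge T^{\pi_{k+1}}T^{h-1}v_k'\ge T^{h-1}v_k'$, and since $v_k'$ is sub-harmonic up to $O(\epsilon)$ (a genuinely sub-harmonic iterate plus the bounded $\tilde\epsilon_{k-1}$), $T^{h-1}v_k'\ge v_k'-O(\epsilon/(1-\gamma))\mathbf{1}$, whence $\|v^*-v^{\pi_{k+1}}\|_\infty\le\|v^*-v_k'\|_\infty+O(\epsilon/(1-\gamma))$. Iterating the displayed recursion from $v_0'=v_0-\Delta_0$, summing $\sum_{j=0}^{k-1}\gamma^{jh}=(1-\gamma^{kh})/(1-\gamma^h)$, and folding in both the offset bound of Step~1 and the transfer slack --- so that the per-iteration additive error bundles to $(2\gamma^h\epsilon+\delta)/(1-\gamma)$ --- yields the claimed bound; $k\to\infty$ gives the stated $\limsup$. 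Everything is uniform over the $m$- and $\lambda$-returns because $T^\pi_\lambda$ is a convex combination of the $(T^\pi)^{j+1}$.

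\textbf{Main obstacle.} Steps~2--3 are routine once consistency is in place. The crux is Step~1: proving that a purely \emph{state-independent} offset always suffices to restore $h$-greedy consistency at each iteration (a monotone fixed-point statement in the offset), and then carefully tracking how the injected value error $\epsilon$ re-enters through this offset together with the shifted noise $\tilde\epsilon_k$ --- this bookkeeping is exactly what pins down the precise constants, notably the factor $1/(1-\gamma)$ and the coefficient $2\gamma^h$ in front of $\epsilon$ in the final contraction estimate.
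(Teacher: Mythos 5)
Your overall architecture is the paper's own: a state-independent offset (the ``invariance argument'') that restores $h$-greedy consistency at every iteration without altering the policy sequence, followed by a $\gamma^h$-contraction recursion unrolled over $k$. Your Step~1 is essentially right --- the offset one derives from sub-harmonicity of the noise-free iterate plus approximate greediness is exactly the paper's $C_k = \bigl(\max \delta_{k+1}+ \gamma^{h-1}\max \epsilon_k -\gamma^h \min \epsilon_k\bigr)/\bigl(\gamma^{h-1}(1-\gamma)\bigr)$, and your claim that a downward constant shift always relaxes Definition~\ref{defn: h consistent} is correct.

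The gap is in Steps~2--3, and it is precisely the bookkeeping you defer to the ``main obstacle''. Your recursion tracks $\|v^*-v_k'\|_\infty$ and adds the evaluation noise undiscounted: $\|v^*-v_{k+1}'\|_\infty\le\gamma^h\|v^*-v_k'\|_\infty+\delta+\|\tilde\epsilon_k\|_\infty$. Since $\|\tilde\epsilon_k\|_\infty\le\epsilon+\Delta_{k+1}$ and $\Delta_{k+1}$ is itself of order $\bigl((1+\gamma)\epsilon+\gamma^{1-h}\delta\bigr)/(1-\gamma)$, the per-iteration additive term comes out as roughly $\bigl(2\epsilon+(1-\gamma+\gamma^{1-h})\delta\bigr)/(1-\gamma)$, which is strictly larger than the required $(2\gamma^h\epsilon+\delta)/(1-\gamma)$ for every $\gamma<1$ and $h\ge1$; unrolling it therefore does not yield the stated constant. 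The paper avoids this by running the recursion on the \emph{noise-free} iterate, $d_k=v^*-(v_k'-\epsilon_k')$: because $v_{k+1}'$ minus its injected noise equals $(T^{\pi_{k+1}})^m T^{h-1}v_k'$ exactly, the noise $\epsilon_k'$ only ever re-enters the next step through $T^h\bigl((v_k'-\epsilon_k')+\epsilon_k'\bigr)$ and is thus discounted by $\gamma^h$ before it accumulates --- this is where the coefficient $2\gamma^h$ in front of $\epsilon$ comes from. The same choice makes the transfer to the policy value exact, $v^*-v^{\pi_{k+1}}\le v^*-(v_k'-\epsilon_k')=d_k$ (second claim of Lemma~\ref{lemma: invariant property noisy} combined with Lemma~\ref{lemma: help value improvement}), so none of the extra $O(\epsilon/(1-\gamma))$ slack from your step $T^{h-1}v_k'\ge v_k'-O(\epsilon/(1-\gamma))$ is incurred. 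As written, your argument proves a bound of the right form ($\gamma^{kh}$ rate plus an $O\bigl((\epsilon+\delta)/((1-\gamma)(1-\gamma^h))\bigr)$ floor) but with strictly worse constants than the theorem states.
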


\begin{proof}

We start with the invariance argument. Consider the process with the alternative error in the evaluation stage, $\epsilon_k'=\epsilon_k-C_k e$, where  $C_k = \frac{\max \delta_{k+1}+ \gamma^{h-1}\max \epsilon_k -\gamma^h \min \epsilon_k}{\gamma^{h-1}(1-\gamma)}$, and $e$ a vector of `ones' of dimension $|\mathcal{S}|$. Next, given initial value $v_0$, let $v'_0=v_0-\Delta_0$. As described in Remark \ref{remark: h greedy consistancy is easier}, this transformation makes $(v_0',\pi_1)$ $h$-greedy consistent. Since the greedy policy is invariant for an addition of a constant, i.e., for $\alpha \in \mathbb{R},~\G_h(v+\alpha e)=\G_h(v)$, and since $T^h(v+\alpha e)=T^h v+\gamma^h\alpha$, we have that the \emph{sequence of policies generated is invariant} for the offered transformation.

Next, we use Lemma~\ref{lemma: invariant property noisy}, which gives that the choice of $C_k$ leads to a sequence of pairs of $h$-greedy consistent policies and values in every iteration.
Thus, we can now continue with simpler analysis than in \cite{beyond2018efroni}. 

At this stage of the proof we focus on $hm$-PI. Define $d_k \eqdef v^* -(v'_k-\epsilon_k')$ for $k\geq 1$, and $d_0 \eqdef v^* -v'_0$. We get
\begin{align}
d_{k+1} &=  v^* -(T^{\pi_{k+1}})^m T^{h-1} v'_{k} \label{eq: hm proof 1}\\
&\leq  v^* -T^{\pi_{k+1}} T^{h-1} v'_{k} \label{eq: hm proof 2}\\
&\leq  v^* -T^{h} v'_{k} +\max \delta_{k+1}\nonumber\\
&\leq  (T^{\pi_*})^{h} v^* -T^{h} ( v'_{k}-\epsilon_{k}') -\gamma^h \min \epsilon_k' +\max \delta_{k+1}\nonumber\\
&\leq  (T^{\pi_*})^{h}v^* -(T^{\pi_*})^{h} ( v'_{k}-\epsilon_{k}') -\gamma^h \min \epsilon_k' +\max \delta_{k+1}\nonumber\\
&=  \gamma^h(P^{\pi_*})^{h}(v^* -( v'_{k}-\epsilon_{k}')) -\gamma^h \min \epsilon_k' +\max \delta_{k+1}\nonumber\\
&=  \gamma^h(P^{\pi_*})^{h}d_{k} -\gamma^h \min \epsilon_k' +\max \delta_{k+1}. \nonumber
\end{align}
The second relation holds by applying Lemma \ref{lemma: help value improvement} on $\pi_{k+1}$ and $v_k$ which are $h$-consistent. Furthermore, by using the form of $C_k$ and simple algebraic manipulations it can be shown that ${-\gamma^h \min \epsilon_k' +\max \delta_{k+1}\leq \frac{2\gamma^h\epsilon+\delta}{1-\gamma}}$. Thus,
\begin{align}
d_{k+1} \leq \gamma^h(P^{\pi_*})^{h}d_{k} + \frac{2\gamma^h\epsilon+\delta}{1-\gamma}.
\end{align}
Iteratively applying the above relation on $k$, we get that
\begin{align}
d_k &\leq \gamma^{kh}(P^{\pi_*})^{kh}d_{0} + \frac{(2\gamma^h\epsilon+\delta)(1-\gamma^{kh})}{(1-\gamma)(1-\gamma^h)} \nonumber\\ 
&\leq \gamma^{kh}||d_{0}||_\infty + \frac{(2\gamma^h\epsilon+\delta)(1-\gamma^{kh})}{(1-\gamma)(1-\gamma^h)}.\label{eq: recursive equation}
\end{align}
To conclude the proof for $hm$-PI notice that 
$
v^*-v^{\pi_{k+1}} \leq v^* - (v'_k-\epsilon_k') = d_k,
$
which holds due to the second claim in Lemma~\ref{lemma: invariant property noisy} combined with Lemma~\ref{lemma: help value improvement}. Since the LHS is positive, we can apply the max norm on the inequality and use \eqref{eq: recursive equation}:
\begin{align*}
 ||v^*-v^{\pi_{k+1}}||_\infty \leq \gamma^{kh}||d_{0}||_\infty + \frac{(2\gamma^h\epsilon+\delta)(1-\gamma^{kh})}{(1-\gamma)(1-\gamma^h)}.
\end{align*} 
Since $d_{0}=v^*-v_0'= v^*-(v_0-\Delta_0)$, we obtain the first claim for $hm$-PI. Taking the limit easily gives the second claim, again for $hm$-PI:
\begin{align*}
\lim_{k\rightarrow\infty} ||v^*-v^{\pi_{k+1}}||_\infty \leq \frac{2\gamma^h\epsilon+\delta}{(1-\gamma)(1-\gamma^h)}.
\end{align*} 

The convergence proof for $h\lambda$-PI is identical to that of $hm$-PI, except for a minor change: the transition from \eqref{eq: hm proof 1} to \eqref{eq: hm proof 2} holds due to the following argument: 
\begin{align*}
d_{k+1} &\leq  v^* -(1-\lambda)\sum_i \lambda^i(T^{\pi_{k+1}})^{i+1} T^{h-1} v'_{k}\\
 &\leq  v^* -(1-\lambda)\sum_i \lambda^i T^{\pi_{k+1}} T^{h-1} v'_{k} \\
 &= v^* - T^{\pi_{k+1}} T^{h-1}v'_k, 
\end{align*}
where the second relation holds by applying Lemma \ref{lemma: help value improvement}. This can be used since $\pi_{k+1}$ and $v'_k$ are $h$-greedy consistent according to Lemma \ref{lemma: invariant property noisy}. This exemplifies the advantage of using the notion of $h$-greedy consistency in our proof technique.
\end{proof}
Thanks to using $T^{h-1}v$ in the evaluation stage, Theorem~\ref{adp} guarantees a convergence rate of $\gamma^h$ -- as to be expected when using a $T^h$ greedy operator. Compared to directly using $v$ as is done in Section~\ref{sec: wrong hmPI hlambdaPI}, this is a significant improvement since the latter does not even necessarily contract.

A possibly more `natural' version of our algorithms would back up the value of the root node instead of its descendants. The following remark extends on that.
\begin{remark}\label{remark: root backup}
Consider a variant of $hm$-PI and $h\lambda$-PI, which backs-up $T^{\pi_{k+1}}T^{h-1}v_k$ instead of $T^{h-1}v_k.$ Namely, in this variant, the evaluation stage for $hm$-PI (Algorithm~\ref{alg:hmPI}) is
\begin{align*}
v_{k+1} \gets (T^{\pi_{k+1}})^{m-1} (T^{\pi_{k+1}} T^{h-1}v_k) + \epsilon_k,
\end{align*}
and for $h \lambda$-PI (Algorithm~\ref{alg:hlambdaPI}) it is
\begin{align*}
v_{k+1} \gets \bar{T}^{\pi_{k+1}}_\lambda (T^{\pi_{k+1}} T^{h-1}v_k) + \epsilon_k.
\end{align*} 
The latter is (see Appendix \ref{supp: root backup})
$
{\bar{T}_\lambda^\pi v \eqdef (1-\lambda) \sum_{j=0}^\infty \lambda^j (T^\pi)^{j} v}
= {v+ \lambda(I-\gamma\lambda P^\pi)^{-1}(T^\pi v - v)}
$
 -- a variation of the $\lambda$-return operator from \eqref{def: T lambda pi}, in which $T^{\pi}$ is raised to the power of $j$ and not $j+1$. The performance of these algorithms is equivalent to that of the original $hm$-PI and $h\lambda$-PI, as given in Theorem~\ref{adp}, since
\[
(T^{\pi})^{m-1} T^{\pi} = (T^{\pi})^{m}~\mbox{ and }~\bar{T}^{\pi}_\lambda T^{\pi} = T^\pi_\lambda.
\]
Yet, implementing them is potentially easier in practice, and can be considered more `natural' due to the backup of the root optimal value rather its descendants.
\end{remark}

\section{Relation to Existing Work}
In the context of related theoretical work, we find two results necessitating a discussion. The first is the performance bound of Non-Stationary Approximate Modified PI (NS-AMPI) \cite{lesner2015non}[Theorem 3]. Compared to it, Theorem~\ref{adp} reveals two improvements. First, it gives that $hm$-PI is less sensitive to errors; our bound's numerator  has $\gamma^{h}$ instead of $\gamma$. Second, in each iteration, $hm$-PI requires storing a single policy in lieu of $h$ policies as in NS-AMPI. This makes $hm$-PI significantly more memory efficient.  Nonetheless, there is a caveat in our work compared to \cite{lesner2015non}. In each iteration, we require to approximately solve an $h$-finite-horizon problem, while they require solving approximate $1$-step greedy problem instances. 

The second relevant theoretical result is the performance bound of a recently introduced MCTS-based RL algorithm \cite{pmlr-v80-jiang18a}[Theorem 1]. There, in the noiseless case there is no guarantee for convergence to the optimal policy\footnote{The bound in \cite{pmlr-v80-jiang18a}[Theorem 1] is not necessarily $0$ for $\epsilon=0$ since $B_\gamma,B'_\gamma,\mathbb{D}_0$ and $\mathbb{D}_1$ do not depend on the error and, generally, are not $0$.}. 
Contrarily, in our setup, with
 $\delta=0$ and $\epsilon=0$ both $hm$-PI and $h\lambda$-PI converge to the optimal policy.

Next, we discuss related literature on empirical studies and attempt to explain observations there with the results of this work.
In \cite{baxter1999tdleaf,veness2009bootstrapping,lanctot2014monte} the idea of incorporating the optimal value from the tree-search was experimented with. Most closely related to our synchronous setup is that in \cite{baxter1999tdleaf}. There, motivated by practical reasons, the authors introduced and evaluated both  NC $h\lambda$-PI and $h\lambda$-PI, which they respectively call TD-directed$(\lambda)$ and TDLeaf$(\lambda).$ 
Specifically,  TD-directed$(\lambda)$ and TDLeaf$(\lambda)$ respectively back up $v$ and $T^{\pi_h} T^{h-1}v$. As Remark \ref{remark: h optimal value} suggests, $T^{\pi_h} T^{h-1}v$ can be extracted directly from the tree-search, as is also pointed out in \cite{baxter1999tdleaf}. Interestingly, the authors show that TDLeaf$(\lambda)$  outperforms TD-directed$(\lambda)$. Indeed, Theorem \ref{thm: contraction coefficient} sheds light on this phenomenon.

Lastly, a prominent takeaway message from Theorems~\ref{thm: contraction coefficient} and \ref{adp}  is that AlphaGoZero \cite{silver2017mastering,silver2017mastering2} can be potentially improved. This is because in \cite{silver2017mastering}, the authors do not back up the optimal value calculated from the tree search. As their approach relies on PI (and specifically resembles to $h$-PI), our analysis, which covers noisy partial evaluation, can be beneficial even in the practical setup of AlphaGoZero.

\section{Experiments} \label{sec: experiments}

\begin{figure}
\includegraphics[scale=0.33]{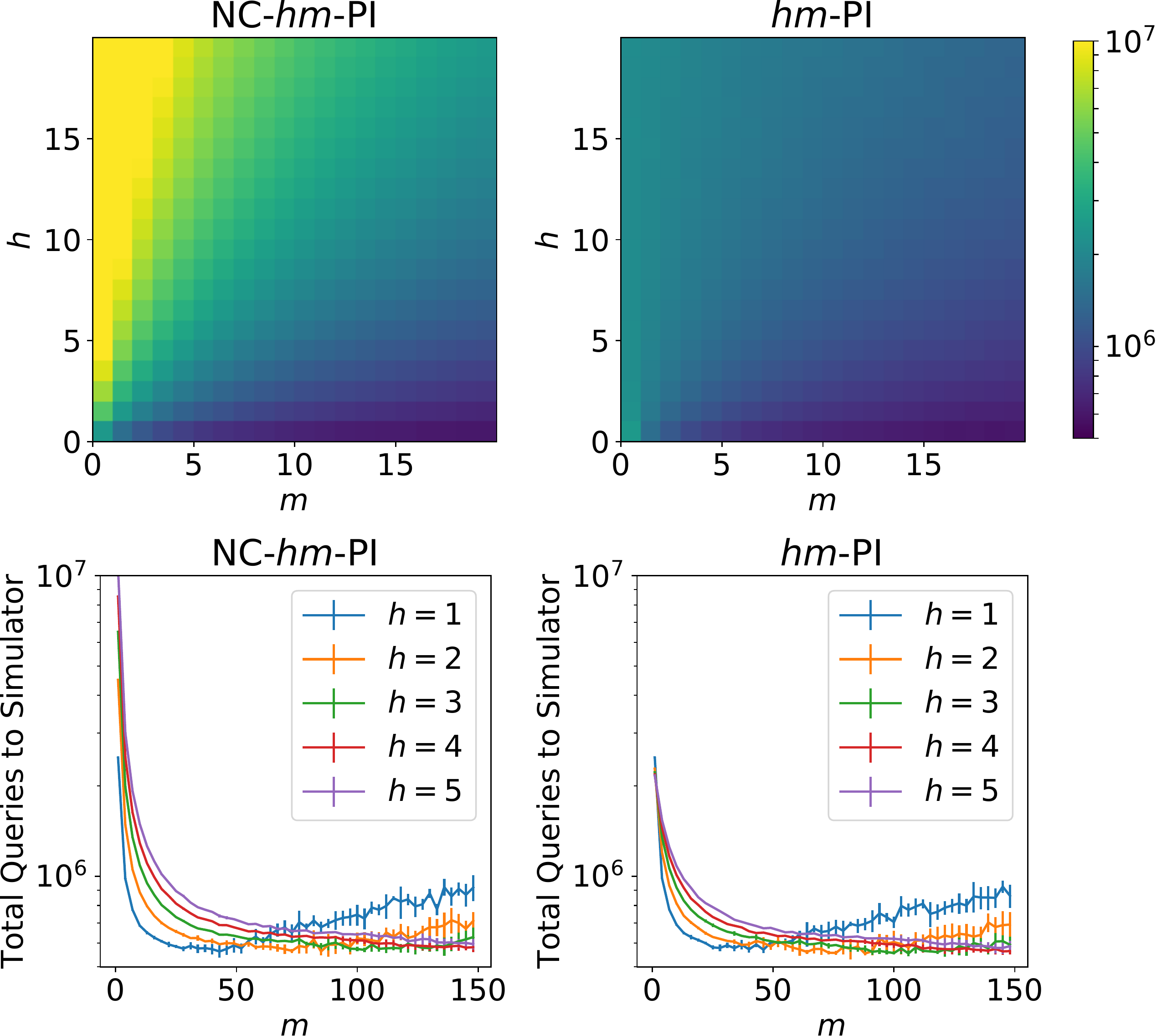}
\caption{(Top) Noiseless NC-$hm$-PI and $hm$-PI convergence time as function of $h$ and $m$. (Bottom) Noiseless NC-$hm$-PI and $hm$-PI convergence time as function of a wide range of $m,$ for several values of $h$. In both figures, the standard error is less than $\% 2$ of the mean.}
\label{fig: noisless res}
\end{figure}

In this section, we empirically study NC-$hm$-PI  (Section~\ref{sec: wrong hmPI hlambdaPI}) and $hm$-PI (Section~\ref{sec: hm-PI and hlambda-PI})  in the exact and approximate cases. Additional results can also be found in Appendix \ref{supp: more results}. Our experiments demonstrate the practicalities of Theorem~\ref{thm: contraction coefficient} and \ref{adp}, even in the simple setup considered here.

We conducted our simulations on a simple $N \times N$ deterministic grid-world problem with $\gamma=0.97$, as was done in \cite{beyond2018efroni}. The action set is \{`up',`down',`right',`left',`stay'\}. In each experiment, a reward $r_g=1$ was placed in a random state while in all other states the reward was drawn uniformly from $[-0.1,0.1]$. In the considered problem there is no terminal state. Also, the entries of the initial value function are drawn from $\mathcal{N}(0,1)$. We ran the algorithms and counted the \emph{total} number of calls to the simulator. Each such ``call'' takes a state-action pair $(s,a)$ as input, and returns the current reward and next (deterministic) state. Thus, it quantifies the total running time of the algorithm, and not the total number of iterations.

We begin with the noiseless case, in which $\epsilon_k$ and $\delta_k$ from Algorithm~\ref{alg:hmPI} are $0$. While varying $h$ and $m$, we counted the total number of queries to the simulator until convergence, which defined as $||v^*-v_k ||_\infty\leq 10^{-7}$.  Figure~\ref{fig: noisless res} exhibits the results. In its top row, the heatmaps give the convergence time for equal ranges of $h$ and $m$. It highlights the suboptimality of NC-$hm$-PI compared to $hm$-PI.
As expected, for $h=1,$ the results coincide for NC-$hm$-PI and $hm$-PI since the two algorithms are then equivalent. For $h>1$, the performance of NC-$hm$-PI significantly deteriorates up to an order of magnitude compared to $hm$-PI.  However, the gap between the two becomes less significant as $m$ increases. This can be explained with Theorem \ref{thm: contraction coefficient}: increasing $m$ in NC-$hm$-PI drastically shrinks $\gamma^m$ in \eqref{eq: non-contraction m} and brings the contraction coefficient closer to $\gamma^h$, which is that of $hm$-PI. In the limit $m\rightarrow \infty$ both algorithms become $h$-PI. 

The bottom row in Figure~\ref{fig: noisless res} depicts the convergence time in 1-d plots for several small values of $h$ and a large range of $m$. It highlights the tradeoff in choosing $m$. As $h$ increases, the optimal choice of $m$ increases as well. 
Further rigorous analysis of this tradeoff in $m$ versus $h$ is an intriguing subject for future work.

Next, we tested the performance of NC-$hm$-PI and $hm$-PI in the presence of evaluation noise.
Specifically, ${\forall  k, s\in \mathcal{S},\ \epsilon_k(s)\sim U(-0.3,0.3)}$ and $ \delta_k(s)=0$. For NC-$hm$-PI, the noise was added according to ${v_{k+1} \gets  (T^{\pi_k})^m v_k +\epsilon_k}$ instead of the update in the first equation in \eqref{eq: bad alg value upadte}. The value $\delta_k=0$ corresponds to having access to the exact model. Generally, one could leverage the model for a complete immediate solution instead of using Algorithm~\ref{alg:hmPI}, but here we consider cases where this cannot be done due to, e.g., too large of a state-space. In this case, we can approximately estimate the value and use a multiple-step greedy operator with access to the exact model. Indeed, this setup is conceptually similar to that taken in AlphaGoZero  \cite{silver2017mastering}. 
Figure~\ref{fig: approximate res} exhibits the results. The heatmap values are $||v^*-v^{\pi_f} ||_\infty,$ where $\pi_f$ is the algorithms' output policy after $4\cdot 10^6$ queries to the simulator. Both NC-$hm$-PI and $hm$-PI converge to a better value as $h$ increases. However, this effect is stronger in the latter compared to the former, especially for small values of $m$. This demonstrates how $hm$-PI is less sensitive to approximation error. This behavior corresponds to the $hm$-PI error bound in Theorem~\ref{adp}, which decreases as $h$ increases. 


\begin{figure}
\includegraphics[scale=0.33]{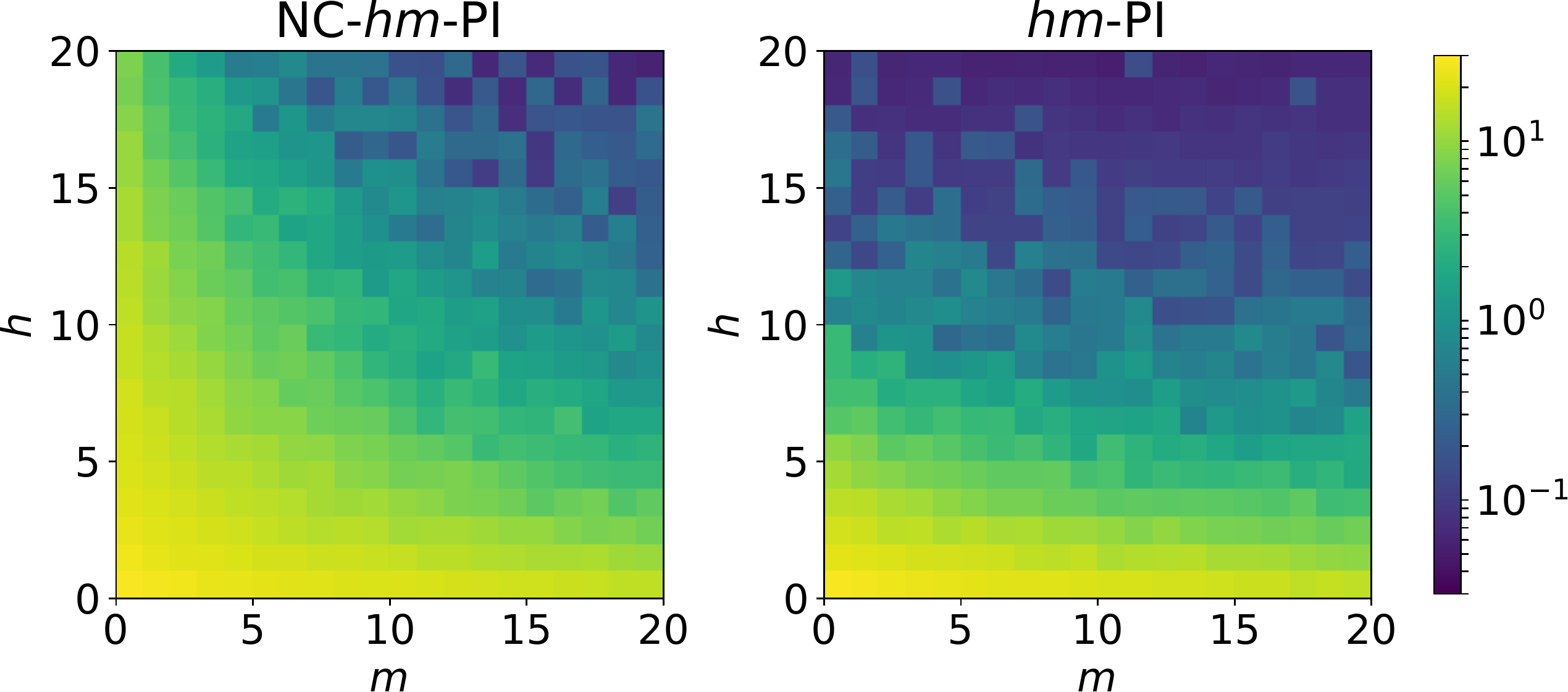}
\caption{Distance from optimum (lower is better) for NC-$hm$-PI and $hm$-PI in the presence of evaluation noise.
	The heatmap values are $||v^*-v^{\pi_f} ||_\infty,$ where $\pi_f$ is the algorithms' output policy after $4\cdot 10^6$ queries to the simulator.  The standard error of the results is given in Appendix \ref{supp: more results}.
}
\label{fig: approximate res}
\end{figure}
\section{Summary and Future Work}

In this work, we formulated, analyzed and tested two approaches for relaxing the evaluation stage of $h$-PI -- a multiple-step greedy PI scheme. The first approach backs up $v$ and the second backs up $T^{h-1}v$ or $T^{\pi_h}T^{h-1}v$ (see Remark~\ref{remark: root backup}). Although the first might seem like the natural choice, we showed it performs significantly worse than the second, especially  when combined with short-horizon evaluation, i.e., small $m$ or $\lambda$. Thus, due to the intimate relation between $h$-PI and state-of-the-art RL algorithms (e.g., \cite{silver2017mastering}), we believe the consequences of the presented results could lead to better algorithms in the future.

Although we established the non-contracting nature of the algorithms in Section \ref{sec: wrong hmPI hlambdaPI}, we did not prove they would necessarily not converge. We believe that further analysis of the non-contracting algorithms is intriguing, especially given their empirical converging behavior in the noiseless case (see Section~\ref{sec: experiments}, Figure \ref{fig: noisless res}). Understanding when the non-contracting algorithms perform well is of value, since their update rules are much simpler and easier to implement than the contracting ones. 

To summarize, this work highlights yet another difference between 1-step based and multiple-step based PI methods, on top of the ones presented in \cite{beyond2018efroni,efroni2018multiple}. Namely, multiple-step based methods introduce a new degree of freedom in algorithm design: the utilization of the planning byproducts. We believe that revealing additional such differences and quantifying their pros and cons is both intriguing and can have meaningful algorithmic consequences.



\bibliographystyle{aaai}\bibliography{hPI} 

\appendix

\onecolumn

\section{Proof of Lemma \ref{lemma: help value improvement}}\label{supp: help value improvement}
Since $(v,\pi)$ is $h$-greedy consistent we have that,
\begin{align*}
T^{h-1}v \leq T^{\pi}T^{h-1} v.
\end{align*}
By remembering that $(T^{\pi})^{l-1}$,for any $l \in  \mathbb{N} \setminus \{0\}$, is a monotonic operator we have that
\begin{align*}
(T^{\pi})^{l-1} T^{h-1}v\leq (T^{\pi})^l T^{h-1} v.
\end{align*}
We can concatenate the inequalities and conclude by observing that ${\lim_{l\rightarrow\infty}(T^{\pi})^l T^{h-1} v = v^\pi}$, since $T^{\pi}$ is a contraction operator with a fixed point $v^\pi$.

\section{Affinity of $T^\pi$ and Consequences}
In this section we prove, for completeness, an important property of $T^\pi$ (which was also described in \cite{beyond2018efroni}[Appendix~B]).
\begin{lemma} \label{lemma: left distributive}
Let $\{v_i,\lambda_i\}_{i=0}^\infty$ be a series of value functions, $v_i\in \mathbb{R}^{|\mathcal{S}|}$, and positive real numbers, $\lambda_i\in \mathbb{R}^+$, such that $\sum_{i=0}^\infty \lambda_i=1$. Let $T^\pi$ be a fixed policy Bellman operator and $n\in \mathbb{N}$. Then,
\begin{align*}
&T^\pi(\sum_{i=0}^\infty \lambda_i v_i )= \sum_{i=0}^\infty \lambda_i T^\pi v_i,\\
&(T^\pi)^n(\sum_{i=0}^\infty \lambda_i v_i )= \sum_{i=0}^\infty \lambda_i (T^\pi)^n v_i.
\end{align*}
\end{lemma}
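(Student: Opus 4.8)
The plan is to exploit the fact that $T^\pi$ is an \emph{affine} operator, $T^\pi v = r^\pi + \gamma P^\pi v$, where $P^\pi$ is a linear stochastic matrix (hence a bounded, continuous map on the finite-dimensional space $\mathbb{R}^{|\mathcal S|}$, with $\|P^\pi v\|_\infty \le \|v\|_\infty$). First I would prove the first identity directly. Writing $w \eqdef \sum_{i=0}^\infty \lambda_i v_i$ — which converges, since $\lambda_i \ge 0$, $\sum_i \lambda_i = 1$, and in every application of the lemma the $v_i$ are uniformly bounded — continuity and linearity of $P^\pi$ let me pull it through the series, $P^\pi w = \sum_{i=0}^\infty \lambda_i P^\pi v_i$. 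Then
\begin{align*}
T^\pi w = r^\pi + \gamma P^\pi w = \Big(\sum_{i=0}^\infty \lambda_i\Big) r^\pi + \gamma \sum_{i=0}^\infty \lambda_i P^\pi v_i = \sum_{i=0}^\infty \lambda_i \big(r^\pi + \gamma P^\pi v_i\big) = \sum_{i=0}^\infty \lambda_i T^\pi v_i,
\end{align*}
where the crucial point is that $\sum_i \lambda_i = 1$ lets the constant term $r^\pi$ be rewritten as $\sum_i \lambda_i r^\pi$ and absorbed into the sum — i.e. affine maps preserve convex (more generally, weight-$1$) combinations.

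For the second identity I would induct on $n$. The case $n=0$ is trivial and $n=1$ is the first identity. Assuming $(T^\pi)^n(\sum_i \lambda_i v_i) = \sum_i \lambda_i (T^\pi)^n v_i$, I apply $T^\pi$ to both sides and invoke the first identity once more, now with the value functions $(T^\pi)^n v_i$ (and the same weights $\lambda_i$) playing the role of $v_i$:
\begin{align*}
(T^\pi)^{n+1}\Big(\sum_{i=0}^\infty \lambda_i v_i\Big) = T^\pi\Big(\sum_{i=0}^\infty \lambda_i (T^\pi)^n v_i\Big) = \sum_{i=0}^\infty \lambda_i (T^\pi)^{n+1} v_i,
\end{align*}
which closes the induction.

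The only genuine subtlety — hence the step I would write out most carefully — is the interchange of $T^\pi$ (equivalently $P^\pi$) with the infinite series: I need the partial sums $\sum_{i=0}^N \lambda_i v_i$ to converge in $\mathbb{R}^{|\mathcal S|}$ and $P^\pi$ to be continuous, so that $P^\pi \lim_N \sum_{i\le N}\lambda_i v_i = \lim_N \sum_{i\le N}\lambda_i P^\pi v_i$. Both facts are immediate: $P^\pi$ is a nonexpansive linear map on a finite-dimensional space, and the series $\sum_i \lambda_i v_i$ converges absolutely whenever the $v_i$ are bounded, which holds in all uses of the lemma (e.g. the $v_i = (T^\pi)^j v$ obtained by iterating the $\gamma$-contraction $T^\pi$). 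Everything else is routine algebra.
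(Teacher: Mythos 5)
Your proof is correct and follows essentially the same route as the paper's: expand $T^\pi v = r^\pi + \gamma P^\pi v$, use $\sum_i \lambda_i = 1$ to absorb the constant term $r^\pi$ into the weighted sum, and obtain the $n$-fold version by iterating the one-step identity. The only difference is that you explicitly justify interchanging $P^\pi$ with the infinite series (via continuity of the linear map and convergence of the partial sums), a point the paper passes over silently; this is a welcome refinement rather than a different argument.
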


\begin{proof}
Using simple algebra and the definition of $T^\pi$ (see Definition \ref{def: Tpi}) we have that
\begin{align*}
T^\pi(\sum_{i=0}^\infty \lambda_i v_i ) &= r^\pi +\gamma P^\pi (\sum_{i=0}^\infty \lambda_i v_i) =r^\pi +\sum_{i=0}^\infty \lambda_i \gamma P^\pi  v_i\\
&=\sum_{i=0}^\infty \lambda_i  \left(r^\pi + \gamma P^\pi v_i \right)=\sum_{i=0}^\infty \lambda_i  T^\pi v_i.
\end{align*}
The second claim is a result of the first claim and is proved by iteratively applying the first relation.
\end{proof}
\section{Proof of Proposition \ref{proposition: help value improvement2}}\label{supp: help value improvement2}

The proof goes as follows.
\begin{align}
v^*-(T^{\pi_h})^m T^{h-1}v &\leq v^*-T^{\pi_h}T^{h-1}v \label{eq: supp prop m lhs}\\
&=v^*-T^{h}v \nonumber\\
&=(T^{\pi_*})^h v^*-T^{h}v \nonumber\\
&\leq (T^{\pi_*})^h v^*-(T^{\pi_*})^h v \nonumber\\
&\leq \gamma ^h (P^{\pi_*})^h( v^*- v)\leq \gamma ^h || v^*- v||_\infty \nonumber.
\end{align}
The first relation holds due to Lemma \ref{lemma: help value improvement}, the second relation holds since $\pi_h\in \G_h(v)$, and the last relation holds since $(P^{\pi_*})^h$ is a stochastic matrix. To prove similar result for the second claim we merely change the first relation, to
\begin{align}
v^*-T^{\pi_h}_\lambda T^{h-1}v &= v^*-(1-\lambda)\sum_i \lambda^i(T^{\pi_h})^{i+1} T^{h-1}v \label{eq: supp prop lambda lhs}\\
&\leq v^*-(1-\lambda)\sum_i \lambda^i T^{\pi_h} T^{h-1}v \nonumber\\
&=v^*-T^{\pi_h}T^{h-1}v\nonumber,
\end{align} 
where the second relation holds according to Lemma \ref{lemma: help value improvement} since $(\pi_h,v)$ are $h$-greedy consistent.

Furthermore, 
\begin{align*}
(T^{\pi_h})^{m} T^{h-1}v \leq v^{\pi_h} \leq v^*,
\end{align*}
and
\begin{align*}
T^{\pi_h}_\lambda T^{h-1}v = &(1-\lambda)\sum_i \lambda^i(T^{\pi_h})^{i+1} (T^{\pi_h})^{h-1}v\\
\leq &(1-\lambda)\sum_i \lambda^i v^{\pi} = v^\pi \leq v^*,
\end{align*}
where the first inequality in both of the relations above holds due to Lemma \ref{lemma: help value improvement}, and the second inequality holds since ${v^\pi\leq v^*}$ for any $\pi$.

Thus, we have that ${v^*-(T^{\pi_h})^m T^{h-1}v,\ v^*-T^{\pi_h}_\lambda T^{h-1}v \geq 0}$, component-wise, and we can take the max-norm on the LHS of \eqref{eq: supp prop m lhs} and \eqref{eq: supp prop lambda lhs} to prove the statements.

\section{Proof of Theorem \ref{thm: contraction coefficient}}\label{supp: contraction coefficicent}
We begin with proving \eqref{eq: non-contraction m}.  We have that
\begin{align}
 v^* -(T^{\pi_h})^m v  =& v^* -v^{\pi_h}+v^{\pi_h}-(T^{\pi_h})^m v \nonumber\\
=& (T^{\pi_*})^h v^* -v^{\pi_h}+(T^{\pi_h})^m v^{\pi_h}-(T^{\pi_h})^m v \nonumber\\
=& (T^{\pi_*})^h v^* -v^{\pi_h}+\gamma^m(P^{\pi_h})^m \left(v^{\pi_h}- v\right) \nonumber\\
\leq& (T^{\pi_*})^h v^* -v^{\pi_h}+\gamma^m(P^{\pi_h})^m \left(v^*- v\right) \nonumber\\
\leq& (T^{\pi_*})^h v^* -T^h v+\gamma^m(P^{\pi_h})^m \left(v^*- v\right) \nonumber\\
\leq& (T^{\pi_*})^h v^* -(T^{\pi_*})^h v+\gamma^m(P^{\pi_h})^m \left(v^*- v\right) \nonumber\\
=& \gamma^h(P^{\pi_*})^h \left( v^* - v \right)+\gamma^m(P^{\pi_h})^m \left(v^*- v\right)  \nonumber\\
\leq& (\gamma^h+\gamma^m) \|v^*- v\|_\infty \label{eq: bad update contracts sometimes}.
\end{align}
The forth relation holds since $v^*\geq v^\pi$, the fifth relation holds due to Lemma \ref{lemma: help value improvement}, the sixth relation holds by the definition of the optimal Bellman operator (namely, $T^l v \geq (T^{\pi})^l v$ for any $v$ and $\pi$), and the last relation holds since $(P^{\pi_*})^h, (P^{\pi_h})^h$ are stochastic matrices.

We also have that
\begin{align}
(T^{\pi_h})^m v-v^* & = (T^{\pi_h})^m v-T^m v^* \nonumber\\
&= (T^{\pi_h})^m v-(T^{\pi_h})^m v^* \nonumber\\
&= \gamma^m(P^{\pi_h})^m( v- v^*)\nonumber\\
&\leq \gamma^m\| v- v^*\|_\infty \leq (\gamma^h+\gamma^m) \|v^*- v\|_\infty.  \label{eq: bad update contracts sometimes 2}
\end{align}
Where the first relation holds since $v^*$ is the fixed point of $T^m$, the second relation holds by the definition of the optimal Bellman operator, and the forth relation holds since $(P^{\pi_h})^m$ is a stochastic matrix.

Combining \eqref{eq: bad update contracts sometimes} and \eqref{eq: bad update contracts sometimes} yields
\begin{align}
|| v^* - (T^{\pi_h})^m v ||_\infty \leq & (\gamma^h +\gamma^m) || v^* - v ||_\infty.\label{eq: thm3 first statement}
\end{align}

The second statement is a consequence of the first statement.
\begin{align*}
|| v^* - T_\lambda^{\pi_h} v ||_\infty  = & || v^* - (1-\lambda)\sum_i \lambda^i (T^{\pi_h})^{i+1} v ||_\infty\\
 =& || (1-\lambda)\sum_i \lambda^i (v^*-(T^{\pi_h})^{i+1} v) ||_\infty\\
\leq& (1-\lambda)\sum_i \lambda^i || v^* - (T^{\pi_h})^{i+1} v ||_\infty\\
\leq &\left((1-\lambda)\sum_i \lambda^i (\gamma^{i+1} +\gamma^h) \right)  ||v^* - v ||_\infty \\
=& \left(\frac{\gamma(1-\lambda)}{1-\lambda\gamma} + \gamma^h \right) ||v^* - v ||_\infty.
\end{align*}
In the first relation we use the definition of $T^{\pi_h}_\lambda$, the third relation holds due to the triangle's inequality and the forth relation holds due to \eqref{eq: thm3 first statement}.

To conclude the proof we finish proving the tightness of \eqref{eq: non-contraction lambda} using the same construction given in the part of the proof that is in the paper's body:
	\begin{align*}
	(T^{\pi_h}_\lambda v)(s_0) &= \frac{1-\gamma^h}{1-\gamma}+\sum_{i=0}^\infty (\gamma\lambda)^i (\gamma(1-\gamma)v(s_{1}))\\
	&= \frac{1-\gamma^h}{1-\gamma}-\frac{\gamma(1-\lambda)}{(1-\gamma\lambda)}\cdot \frac{1}{1-\gamma}.
	\end{align*}
	
	See that
	\begin{align*}
	|(T^{\pi_h}_\lambda v)(s_0)-v^*(s_0)|=\left(\gamma^h+\frac{\gamma(1-\lambda)}{(1-\gamma\lambda)}\right)\frac{1}{1-\gamma}.
	\end{align*}
	Since $||(T^{\pi_h}_\lambda v)-v^*||_\infty = |(T^{\pi_h}_\lambda v)(s_0)-v^*(s_0)|$,
	\begin{align*}
	|| v^*-T^{\pi_h}_\lambda v ||_\infty &=\left(\gamma^h+\frac{\gamma(1-\lambda)}{(1-\gamma\lambda)}\right)  \frac{1}{1-\gamma} \\
	&= \left(\gamma^h+\frac{\gamma(1-\lambda)}{(1-\gamma\lambda)}\right)|| v^*-v ||_\infty
	\end{align*}

\section{$h$-Greedy Consistency in Each Iteration} \label{supp: invariant property noisy}

The following result is used to prove Theorem~\ref{thm: contraction coefficient}. According to it, the choice of $C_k$ leads to a sequence of $h$-greedy consistent policies and values in every iteration. 
\begin{lemma}\label{lemma: invariant property noisy}
	Let $\epsilon_k'=\epsilon_k-C_k e$, where  $C_k = \frac{\max \delta_{k+1}+ \gamma^{h-1}\max \epsilon_k -\gamma^h \min \epsilon_k}{\gamma^{h-1}(1-\gamma)}$ and $e$ is a vector of `ones' of dimension $|\mathcal{S}|.$ 
	For both $hm$-PI or $h\lambda$-PI, let the value function at the $k$-th iteration with the alternative error, $\epsilon'_k$, be $v'_k$. Let $\pi_{k+1}\in \approxhgreedy{\delta_{k+1}}{v'_k}.$  Then, in every iteration $k$ $(v'_k,\pi_{k+1})$ is $h$-greedy consistent; i.e.,
	\begin{align*}
	T^{h-1}v'_k \leq T^{\pi_{k+1}}T^{h-1} v'_k,
	\end{align*}
	and
	\begin{align*}
	v'_k-\epsilon'_k \leq  T^{\pi_{k+1}}T^{h-1} v'_k.
	\end{align*}
\end{lemma}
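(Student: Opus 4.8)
\textbf{Proof plan for Lemma~\ref{lemma: invariant property noisy}.}

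The plan is to prove the first inequality by induction on $k$, and then deduce the second inequality from the first together with the explicit choice of $C_k$. The base case $k=0$ is handled directly by the invariance transformation: since $v_0' = v_0 - \Delta_0$ with $\Delta_0$ as in Remark~\ref{remark: h greedy consistancy is easier}, the pair $(v_0',\pi_1)$ is $h$-greedy consistent by the very construction in that remark (note that $\pi_1$ is chosen from $\approxhgreedy{\delta_1}{v_0'}$, so one should verify that the approximate-greedy error $\delta_1$ is harmless here, or absorb it into $\Delta_0$; it is cleaner to note that $T^{\pi_1}T^{h-1}v_0' \ge T^h v_0' - \delta_1 \ge T^{h-1}v_0'$ once $v_0'$ is shifted down far enough, which is exactly what $\Delta_0$ guarantees when $\delta_1$ is incorporated, or one simply invokes Remark~\ref{remark: h greedy consistancy is easier} as stated for the exact greedy policy).

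For the inductive step, assume $(v_k', \pi_{k+1})$ is $h$-greedy consistent. By Lemma~\ref{lemma: help value improvement} applied to this pair, we get $T^{\pi_{k+1}}T^{h-1}v_k' \le (T^{\pi_{k+1}})^m T^{h-1}v_k'$ (and likewise $\le T^{\pi_{k+1}}_\lambda T^{h-1}v_k'$ in the $h\lambda$-PI case). Hence the update gives
\begin{align*}
v_{k+1}' = (T^{\pi_{k+1}})^m T^{h-1}v_k' + \epsilon_k' \ge T^{\pi_{k+1}} T^{h-1}v_k' + \epsilon_k' = T^h v_k' + \epsilon_k',
\end{align*}
where the last equality uses $\pi_{k+1} \in \approxhgreedy{\delta_{k+1}}{v_k'}$ only up to $\delta_{k+1}$, so more precisely $v_{k+1}' \ge T^h v_k' - \delta_{k+1} + \epsilon_k'$. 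Now I want to show $T^{h-1}v_{k+1}' \le T^{\pi_{k+2}} T^{h-1} v_{k+1}'$. The key observation is that establishing $h$-greedy consistency of $(v_{k+1}', \pi_{k+2})$ reduces to a lower bound on $T^{h-1} v_{k+1}'$ relative to itself being improved; since $\pi_{k+2}$ is approximately $h$-greedy w.r.t.\ $v_{k+1}'$, it suffices (by the same argument as in Remark~\ref{remark: h greedy consistancy is easier}) to show that $v_{k+1}'$ is ``far enough below'' a suitable reference. The point is that $v_{k+1}' \ge T^h v_k' - \delta_{k+1} + \epsilon_k'$, and I need to translate this into the required inequality; the constant $C_k$ in $\epsilon_k' = \epsilon_k - C_k e$ is precisely engineered so that the worst-case gap between $T^{h-1}v_{k+1}'$ and $T^{\pi_{k+2}}T^{h-1}v_{k+1}'$ — which by the bound above is controlled by $\max\delta_{k+1} + \gamma^{h-1}\max\epsilon_k - \gamma^h \min\epsilon_k$, scaled by the $\gamma^{h-1}(1-\gamma)$ factor coming from unrolling $T^{h-1}$ and the geometric $\lambda$- or $m$-return — is exactly cancelled.

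For the second inequality, starting again from $v_k' = (T^{\pi_k})^m T^{h-1}v_{k-1}' + \epsilon_k'$ (for $k \ge 1$) or directly for $k=0$, I would write $v_k' - \epsilon_k' = (T^{\pi_k})^m T^{h-1} v_{k-1}'$ and then bound this using Lemma~\ref{lemma: help value improvement} (giving $(T^{\pi_k})^m T^{h-1}v_{k-1}' \le v^{\pi_k}$, hence $\le T^{\pi_{k+1}} T^{h-1} v_k'$ after relating $v^{\pi_k}$ back through the new iterate — this is the step that again uses $h$-greedy consistency of $(v_k',\pi_{k+1})$ just proved). The main obstacle I anticipate is the bookkeeping in the inductive step: carefully propagating the additive errors $\epsilon_k', \delta_{k+1}$ through the operators $T^{h-1}$ and $(T^{\pi})^m$ (or $T^\pi_\lambda$), keeping track of which entries get the $\max$ and which get the $\min$, and verifying that the specific algebraic form of $C_k$ makes the induction close exactly (rather than with slack or a deficit). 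The ``simple algebraic manipulations'' hiding behind $C_k$ are where all the real work is, but the structure is dictated by requiring the post-update pair to land back in the $h$-greedy consistent region.
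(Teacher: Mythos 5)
Your overall architecture matches the paper's: induction on $k$, base case handled by the $\Delta_0$ shift of Remark~\ref{remark: h greedy consistancy is easier}, Lemma~\ref{lemma: help value improvement} as the workhorse, and the constant $C_k$ absorbing the errors. However, the inductive step as written has a genuine gap, because the one concrete inequality you derive is not the one that drives the proof. From the induction hypothesis you extract the \emph{lower} bound $v'_{k+1}\ge T^h v'_k-\delta_{k+1}+\epsilon'_k$ and then assert that $C_k$ is ``engineered'' to cancel the resulting consistency gap; but a lower bound on the new iterate in terms of the old one does not control the quantity you actually need, namely $T^{h-1}v'_{k+1}-T^{\pi_{k+2}}T^{h-1}v'_{k+1}$. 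The mechanism that works extracts a different consequence of Lemma~\ref{lemma: help value improvement}: the \emph{noiseless} iterate $u=(T^{\pi_{k+1}})^{m}T^{h-1}v'_{k}$ (resp.\ $T^{\pi_{k+1}}_{\lambda}T^{h-1}v'_{k}$) satisfies $u\le T^{\pi_{k+1}}u$, hence by monotonicity $T^{h-1}u\le T^{h-1}T^{\pi_{k+1}}u\le T^{h-1}Tu=T^{h}u$. Writing the next value as $u+\epsilon'$, one then bounds $T^{h-1}(u+\epsilon')-T^{\pi}T^{h-1}(u+\epsilon')\le T^{h-1}u-T^{h}u+\gamma^{h-1}\max\epsilon'-\gamma^{h}\min\epsilon'+\max\delta$, and the choice of $C$ makes the error bracket exactly zero. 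Note also that the denominator $\gamma^{h-1}(1-\gamma)$ arises as $\gamma^{h-1}-\gamma^{h}$, i.e.\ from the different sensitivities of $T^{h-1}$ and $T^{h}$ to a constant shift; it has nothing to do with the geometric $m$- or $\lambda$-return, contrary to your parenthetical.

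A second, smaller gap: the inequality $v'_k-\epsilon'_k\le T^{\pi_{k+1}}T^{h-1}v'_k$ is \emph{not} a corollary of the first claim plus the definition of $C_k$. It needs its own chain, $v'_k-\epsilon'_k-T^{\pi_{k+1}}T^{h-1}v'_k\le \bigl(v'_k-\epsilon'_k\bigr)-T^{h}\bigl(v'_k-\epsilon'_k\bigr)+\bigl(-\gamma^{h}\min\epsilon'_k+\max\delta_{k+1}\bigr)$, together with a \emph{separate} sign check that $-\gamma^{h}\min\epsilon'_k+\max\delta_{k+1}\le 0$ (this uses $\min\epsilon_k\le 0$ and is a different algebraic identity from the one that makes the first error bracket vanish), and finally $v'_k-\epsilon'_k=(T^{\pi_k})^{m}T^{h-1}v'_{k-1}\le (T^{\pi_k})^{m+h}T^{h-1}v'_{k-1}\le T^{h}(T^{\pi_k})^{m}T^{h-1}v'_{k-1}$ by Lemma~\ref{lemma: help value improvement} applied to the pair $(v'_{k-1},\pi_k)$. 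Your proposed route, $(T^{\pi_k})^{m}T^{h-1}v'_{k-1}\le v^{\pi_k}\le T^{\pi_{k+1}}T^{h-1}v'_k$, hinges on the second inequality, which you do not justify and which does not follow in the noisy setting without exactly the bookkeeping above. So the skeleton is right, but the core of the proof --- which consequence of Lemma~\ref{lemma: help value improvement} to use, and the two distinct roles the shift $C_k$ plays --- is missing.
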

\begin{proof}[Proof of Lemma~\ref{lemma: invariant property noisy}: $hm$-PI part]
The proof goes by induction. The induction hypothesis is that $(\pi_k,v'_{k-1})$ is $h$-greedy consistent, $T^{h-1}v'_{k-1}\leq T^{\pi_k}T^{h-1} v'_{k-1}$, and we show it induces both of relations. The base case holds, i.e., $(\pi_1,v'_{0})$ is $h$-greedy consistent, due to $v_0'=v_{0}-d$, (see Remark \ref{remark: h greedy consistancy is easier}).

We start by proving that $(\pi_{k+1},v'_{k})$ for every $k$ by proving the induction step.
\begin{align}
T^{h-1}v'_k - T^{\pi_{k+1}}T^{h-1}v'_k& \leq T^{h-1}v'_k - T^{h}v'_k +\max \delta_{k+1}\nonumber\\
&=  T^{h-1}(v'_k-\epsilon_k') - T^{h}(v'_k-\epsilon_k')+\gamma^{h-1}\max \epsilon_k'-\gamma^h \min \epsilon_k'+\max \delta_{k+1}\nonumber\\
&= T^{h-1}(v'_k-\epsilon_k') - T^{h}(v'_k-\epsilon_k'), \label{eq: lemma nosiy improvement central}
\end{align}
where the last relation holds due to the choice of $\epsilon'_k$ and $C_k$, by which we get ${\gamma^{h-1}\max \epsilon_k'-\gamma^h \min \epsilon_k'+\max \delta_{k+1}=0}$.

We continue with the analysis from \eqref{eq: lemma nosiy improvement central},
\begin{align*}
T^{h-1}(v'_k-\epsilon_k') - T^{h}(v'_k-\epsilon_k')&= T^{h-1}(T^{\pi_k})^m T^{h-1}v'_{k-1} - T^{h}(T^{\pi_k})^m T^{h-1}v'_{k-1}\\
 &\leq  T^{h-1}T^{\pi_k}(T^{\pi_k})^m T^{h-1}v'_{k-1} - T^{h}(T^{\pi_k})^m T^{h-1}v'_{k-1} \\
  &\leq  T^{h-1}T(T^{\pi_k})^m T^{h-1}v'_{k-1} - T^{h}(T^{\pi_k})^m T^{h-1}v'_{k-1} =0.
\end{align*}

In the third relation we used Lemma \ref{lemma: help value improvement} due to the assumption that $(v'_{k-1},\pi_k)$ is $h$-greedy consistent and the monotonicity of $T^{h-1}$, in the forth relation we used the definition of the optimal Bellman operator, i.e., $ T^\pi \bar{v} \leq T\bar{v}$, and the monotonicity of $T^{h-1}$, and in the last relation we used $T^{h-1}T=T^h$ and recognized the two terms cancel one another.

This concludes that that for $hm$-PI the sequence of policies and alternative values are $h$-greedy consistent.

We now prove that ${v'_k-\epsilon_k' - T^{\pi_{k+1}}T^{h-1}v'_k\leq 0}$ for $hm$-PI.
\begin{align}
v'_k-\epsilon_k' - T^{\pi_{k+1}}T^{h-1}v'_k &\leq v'_k - T^{h}v_k +\max \delta_{k+1} \nonumber\\
&\leq v'_k-\epsilon_k' - T^{h}(v'_k-\epsilon_k') -\gamma^h \min \epsilon_k' +\max \delta_{k+1} \nonumber\\
&\leq  v'_k-\epsilon_k' - T^{h}(v'_k-\epsilon_k')\label{eq: lemma nosiy improvement central 2}
\end{align}

The last relation holds due to 
\begin{align*}
&-\gamma^h \min \epsilon_k' +\max \delta_{k+1}=-\gamma^h \min \epsilon_k +\max \delta_{k+1} -(1-\gamma^h)C_k\\
&=\max \epsilon_k(-\frac{1-\gamma^h}{1-\gamma})+\gamma^h \min \epsilon_k(\frac{1-\gamma^h}{\gamma^{h-1}(1-\gamma)}-1) +\max \delta_{k+1}(1-\frac{1-\gamma^h}{\gamma^{h-1}(1-\gamma)})\leq 0.
\end{align*}
See that the first and third terms are negative. Furthermore, $\min \epsilon_k \leq 0$ (if not, we can omit it in all previous analysis) and its coefficient is positive, the second term is also negative as well, and thus the entire expression is negative.

We continue with the analysis from \eqref{eq: lemma nosiy improvement central 2},
\begin{align*}
v'_k-\epsilon_k' - T^{\pi_{k+1}}T^{h-1}v'_k &\leq  v'_k-\epsilon_k' - T^{h}((v'_k-\epsilon_k'))\\
&\leq (T^{\pi_k})^m T^{h-1}v'_{k-1} - T^{h}(T^{\pi_k})^m T^{h-1}v'_{k-1}\\
&\leq (T^{\pi_k})^h(T^{\pi_k})^m T^{h-1}v'_{k-1} - T^{h}(T^{\pi_k})^m T^{h-1}v'_{k-1}\\
&\leq T^h(T^{\pi_k})^m T^{h-1}v'_{k-1} - T^{h}(T^{\pi_k})^m T^{h-1}v'_{k-1}=0.
\end{align*}

Where the third relation holds due to Lemma \ref{lemma: help value improvement}, and in the forth relation we used the definition of the optimal Bellman operator, i.e., $ (T^\pi)^h \bar{v} \leq T^h\bar{v}$.

Since $(\pi_k,v_{k-1})$ is $h$-greedy consistent due to the first claim we get
\begin{align*}
v'_k-\epsilon'_k - T^{\pi_{k+1}}T^{h-1}v'_k\leq 0.
\end{align*}

\end{proof}
To prove the statements for the $h\lambda$-PI we merely have to perform a minor change in \eqref{eq: lemma nosiy improvement central} and \eqref{eq: lemma nosiy improvement central 2} and to use the following Lemma, which is a consequence of Lemma \ref{lemma: help value improvement}.

\begin{lemma} \label{lemma: help lambda}
Let $\lambda\in[0,1], l\in \mathbb{N}$ and $(v,\pi)$ be $h$-greedy consistent. Then,
\begin{align*}
T^\pi_\lambda T^{h-1}v  \leq (T^{\pi})^l T^\pi_\lambda T^{h-1} v .
\end{align*}
\end{lemma}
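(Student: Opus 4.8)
The plan is to reduce everything to Lemma~\ref{lemma: help value improvement} by exploiting the affinity of $T^\pi$. First I would write the $\lambda$-return operator in its series form from \eqref{def: T lambda pi}, namely
\[
T^\pi_\lambda T^{h-1} v = (1-\lambda)\sum_{j=0}^\infty \lambda^j (T^\pi)^{j+1} T^{h-1} v ,
\]
so that $T^\pi_\lambda T^{h-1}v$ is an (infinite) convex combination of the iterates $(T^\pi)^{j+1} T^{h-1} v$, the weights $(1-\lambda)\lambda^j$ being nonnegative and summing to $1$.

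Next, since $(v,\pi)$ is $h$-greedy consistent, Lemma~\ref{lemma: help value improvement} gives that the sequence $\{(T^\pi)^{j+1}T^{h-1}v\}_{j\ge 0}$ is monotonically nondecreasing (and bounded above by $v^\pi$); in particular $(T^\pi)^{j+1}T^{h-1}v \le (T^\pi)^{l+j+1}T^{h-1}v$ componentwise for every $j\ge 0$ and every $l\in\mathbb{N}$. Then I would apply $(T^\pi)^l$ to the displayed series and invoke Lemma~\ref{lemma: left distributive} (affinity of $T^\pi$, extended to countable convex combinations) to commute the operator with the sum:
\[
(T^\pi)^l T^\pi_\lambda T^{h-1} v = (1-\lambda)\sum_{j=0}^\infty \lambda^j (T^\pi)^{l+j+1} T^{h-1} v .
\]
Comparing this with the series for $T^\pi_\lambda T^{h-1}v$ term by term, and using the monotonicity inequality from the previous step together with the nonnegativity of the weights, yields $T^\pi_\lambda T^{h-1} v \le (T^\pi)^l T^\pi_\lambda T^{h-1} v$, which is the claim.

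The argument is essentially bookkeeping, so I do not expect a genuine obstacle; the only delicate point is the interchange of $(T^\pi)^l$ with the infinite sum, which is precisely what Lemma~\ref{lemma: left distributive} licenses. The edge cases are consistent with the statement and need no separate treatment: for $l=0$ both sides coincide, and for $\lambda=1$ both sides equal $v^\pi$.
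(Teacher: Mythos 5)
Your proposal is correct and follows essentially the same route as the paper's proof: expand $T^\pi_\lambda T^{h-1}v$ as the convex combination $(1-\lambda)\sum_j \lambda^j (T^\pi)^{j+1}T^{h-1}v$, use the monotone chain from Lemma~\ref{lemma: help value improvement} termwise, and commute $(T^\pi)^l$ with the infinite sum via Lemma~\ref{lemma: left distributive}. The only difference is the order of the last two steps, which is immaterial.
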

\begin{proof}
We have that
\begin{align*}
T^\pi_\lambda T^{h-1}v &= (1-\lambda)\sum_i \lambda^i (T^\pi)^{i+1}T^{h-1}v\\
& \leq (1-\lambda)\sum_i \lambda^i (T^\pi)^{i+1+ l}T^{h-1}v \\
& = (1-\lambda)\sum_i \lambda^i (T^\pi)^{l}(T^\pi)^{i+1}T^{h-1}v \\
& = (T^\pi)^{l}\left((1-\lambda)\sum_i \lambda^i (T^\pi)^{i+1}T^{h-1}v \right)  = (T^\pi)^{l}T^\pi_\lambda T^{h-1}v.
\end{align*}
Where the third relation holds due to Lemma \ref{lemma: help value improvement},  and the forth relation holds by using Lemma \ref{lemma: left distributive}.

\end{proof}
\begin{proof}[Proof of Lemma~\ref{lemma: invariant property noisy}: $h\lambda$-PI part]
To prove that $h\lambda$-PI preserves the $h$-greedy consistency we start from \eqref{eq: lemma nosiy improvement central} and follow similar line of proof.
\begin{align*}
T^{h-1}(v'_k-\epsilon_k') - T^{h}(v'_k-\epsilon_k') &= T^{h-1}T^{\pi_k}_\lambda v'_{k-1} - T^{h}T^{\pi_k}_\lambda v'_{k-1}\\
&\leq  T^{h-1}T^{\pi_k}T^{\pi_k}_\lambda v'_{k-1} - T^{h}T^{\pi_k}_\lambda v'_{k-1}\\
&\leq  T^{h-1}T T^{\pi_k}_\lambda v'_{k-1} - T^{h}T^{\pi_k}_\lambda v'_{k-1} = 0.
\end{align*} 
Where the third relation holds due to Lemma \ref{lemma: help lambda}, and in the forth relation we used the definition of the optimal Bellman operator, i.e., $ T^\pi \bar{v} \leq T\bar{v}$, and the monotonicity of $T^{h-1}$.

This proves that the $h$-greedy consistency is preserved in $h\lambda$-PI as well. To prove the second statement for $h\lambda$-PI we start from \eqref{eq: lemma nosiy improvement central 2}.
\begin{align*}
v'_k-\epsilon_k' - T^{h}(v'_k-\epsilon_k') &=T^{\pi_k}_\lambda v'_{k-1}- T^{h}T^{\pi_k}_\lambda v'_{k-1}\\
&\leq (T^{\pi_k})^h T^{\pi_k}_\lambda v'_{k-1}- T^{h}T^{\pi_k}_\lambda v'_{k-1}\\
&\leq  T ^h T^{\pi_k}_\lambda v'_{k-1}- T^{h}T^{\pi_k}_\lambda v'_{k-1}=0.
\end{align*}

Where the third relation holds due to Lemma \ref{lemma: help lambda} and the monotonicity of $T^{h-1}$, and in the forth relation we used the definition of the optimal Bellman operator, i.e,, $ (T^\pi)^h \bar{v} \leq T^h\bar{v}$.
\end{proof}

\section{A Note on the Alternative $\lambda$-Return Operator}\label{supp: root backup}

In Remark \ref{remark: root backup} we defined an alternative $\lambda$-return operator, $\bar{T}_\lambda^\pi \eqdef (1-\lambda) \sum_{j=0}^\infty \lambda^j (T^\pi)^{j} v$.  We give here an equivalence form of this operator.
\begin{proposition}
For any $\pi$ and $\lambda\in [0,1]$
\begin{align*}
\bar{T}_\lambda^\pi v = v + \lambda(I-\gamma\lambda P^\pi)^{-1}(T^\pi v - v)
\end{align*}
\end{proposition}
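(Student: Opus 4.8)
The plan is to relate the alternative operator $\bar{T}_\lambda^\pi$ to the ordinary $\lambda$-return operator $T_\lambda^\pi$ from \eqref{def: T lambda pi}, whose closed form $T_\lambda^\pi v = v + (I-\gamma\lambda P^\pi)^{-1}(T^\pi v - v)$ is already available, and then read off the claimed formula by a one-line manipulation. First I would split off the $j=0$ term in the defining series:
\begin{align*}
\bar{T}_\lambda^\pi v = (1-\lambda)\sum_{j=0}^\infty \lambda^j (T^\pi)^{j} v = (1-\lambda) v + (1-\lambda)\sum_{j=1}^\infty \lambda^{j} (T^\pi)^{j} v .
\end{align*}
Then, reindexing the tail sum via $k = j-1$ and pulling out one factor of $\lambda$,
\begin{align*}
(1-\lambda)\sum_{j=1}^\infty \lambda^{j} (T^\pi)^{j} v = \lambda (1-\lambda)\sum_{k=0}^\infty \lambda^{k} (T^\pi)^{k+1} v = \lambda\, T_\lambda^\pi v ,
\end{align*}
using the first form in \eqref{def: T lambda pi}. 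Combining the two displays gives $\bar{T}_\lambda^\pi v = (1-\lambda) v + \lambda T_\lambda^\pi v = v + \lambda (T_\lambda^\pi v - v)$, and substituting the closed form of $T_\lambda^\pi v$ yields exactly $\bar{T}_\lambda^\pi v = v + \lambda (I-\gamma\lambda P^\pi)^{-1}(T^\pi v - v)$.

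For rigor I would first note that the series converges: since $T^\pi$ is a $\gamma$-contraction in the max norm with fixed point $v^\pi$, the sequence $(T^\pi)^j v$ is bounded, and the nonnegative weights $(1-\lambda)\lambda^j$ sum to $1$ for $\lambda \in [0,1)$, so all rearrangements above are legitimate. The only boundary case is $\lambda = 1$: there the defining sum degenerates to $\lim_{j\to\infty}(T^\pi)^j v = v^\pi$ (interpreting the Cesàro-type weights appropriately, exactly as for $T_1^\pi = v^\pi$), while the right-hand side becomes $v + (I-\gamma P^\pi)^{-1}(T^\pi v - v) = v + (I-\gamma P^\pi)^{-1}(r^\pi - (I-\gamma P^\pi)v) = (I-\gamma P^\pi)^{-1} r^\pi = v^\pi$, so the identity still holds.

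I do not expect a genuine obstacle here; the entire content is the observation $\bar{T}_\lambda^\pi = (1-\lambda)\mathrm{Id} + \lambda T_\lambda^\pi$, after which the result is immediate from the already-established closed form of $T_\lambda^\pi$. The only point requiring a modicum of care is making the reindexing and the $\lambda=1$ limit precise, which is the routine bookkeeping indicated above.
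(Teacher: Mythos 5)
Your proof is correct and follows essentially the same route as the paper's: split off the $j=0$ term, reindex to recognize the tail as $\lambda T_\lambda^\pi v$, and substitute the closed form from \eqref{def: T lambda pi}. The extra remarks on series convergence and the $\lambda=1$ boundary case are careful additions the paper omits, but they do not change the argument.
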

\begin{proof}
This relation can be easily derived by using the equivalence in \eqref{def: T lambda pi}. We have that
\begin{align*}
\bar{T}_\lambda^\pi &\eqdef (1-\lambda) \sum_{j=0}^\infty \lambda^j (T^\pi)^{j} v\\
&=(1-\lambda)v + (1-\lambda) \sum_{j=1}^\infty \lambda^j (T^\pi)^{j} v\\
&=(1-\lambda)v + \lambda (1-\lambda) \sum_{j=0}^\infty \lambda^j (T^\pi)^{j+1} v\\
&=(1-\lambda)v +  \underset{\lambda T_\lambda^\pi v}{\underbrace{\lambda(1-\lambda) \sum_{j=0}^\infty \lambda^j (T^\pi)^{j+1} v}} =v + \lambda(I-\gamma\lambda P^\pi)^{-1}(T^\pi v - v),
\end{align*}
where in the last relation we used the equivalent form of $T^\pi_\lambda$ provided in \eqref{def: T lambda pi}.
\end{proof}

\section{More Experimental Results}\label{supp: more results}

In this section we add more empirical result on the convergence of the tested algorithms in Section \ref{sec: experiments} in the approximate case (as described in Section \ref{sec: experiments}). Specifically, we plot $|| v^*-v_k ||_\infty$ versus the total number of queries to the simulator, where $v_k$ is the value function. This complements the plot in Section \ref{sec: experiments}, there we plot $|| v^*-v^{\pi_f} ||_\infty$, where $v^{\pi_f}$ is the exact value of the policy that the algorithms output.

In the presence of errors, the value does not converge to a point in the, but only to a region. According to Theorem \ref{adp}, as $h$ increases, $hm$-PI is expected to converge to a `better' policy (i.e., closer to the optimal policy). As the results in Figure \ref{fig: appendix res} demonstrate, also the value function, $v$, of $hm$-PI converges to a better region than NC-$hm$-PI. This would be expected since a better policy would correspond to a better value function estimate. Furthermore, it is also observed that $hm$-PI converges faster than NC-$hm$-PI. This is again expected due to the possible non-contracting nature of this algorithm.

Lastly, in Figure \ref{fig: appendix res error bar}  the standard error, which corresponds to the mean results in Figure \ref{fig: approximate res}, is given.


{
\begin{figure*}[ht]
\noindent\makebox[\textwidth][c]{
\includegraphics[scale=0.35]{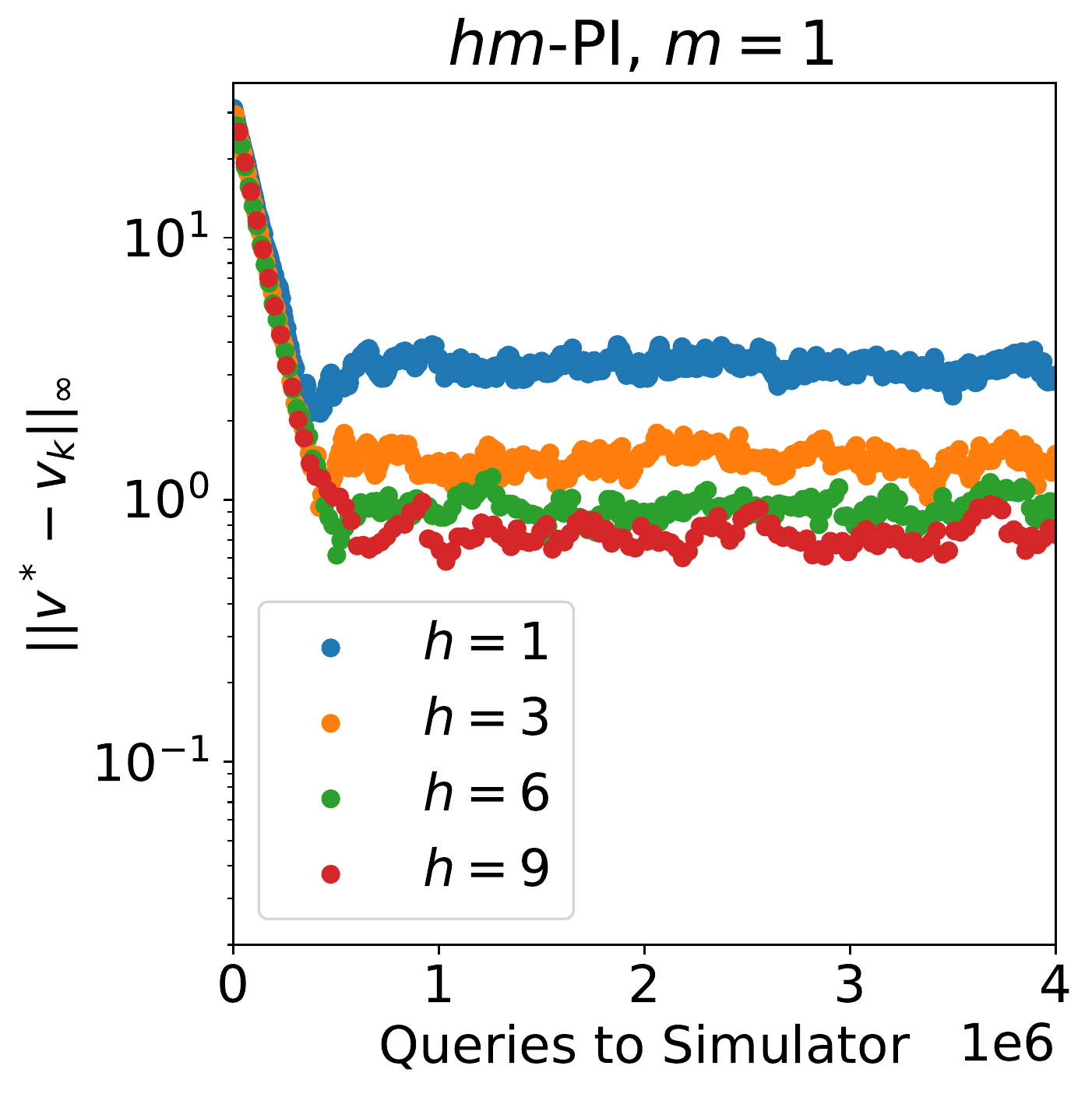}
\includegraphics[scale=0.35]{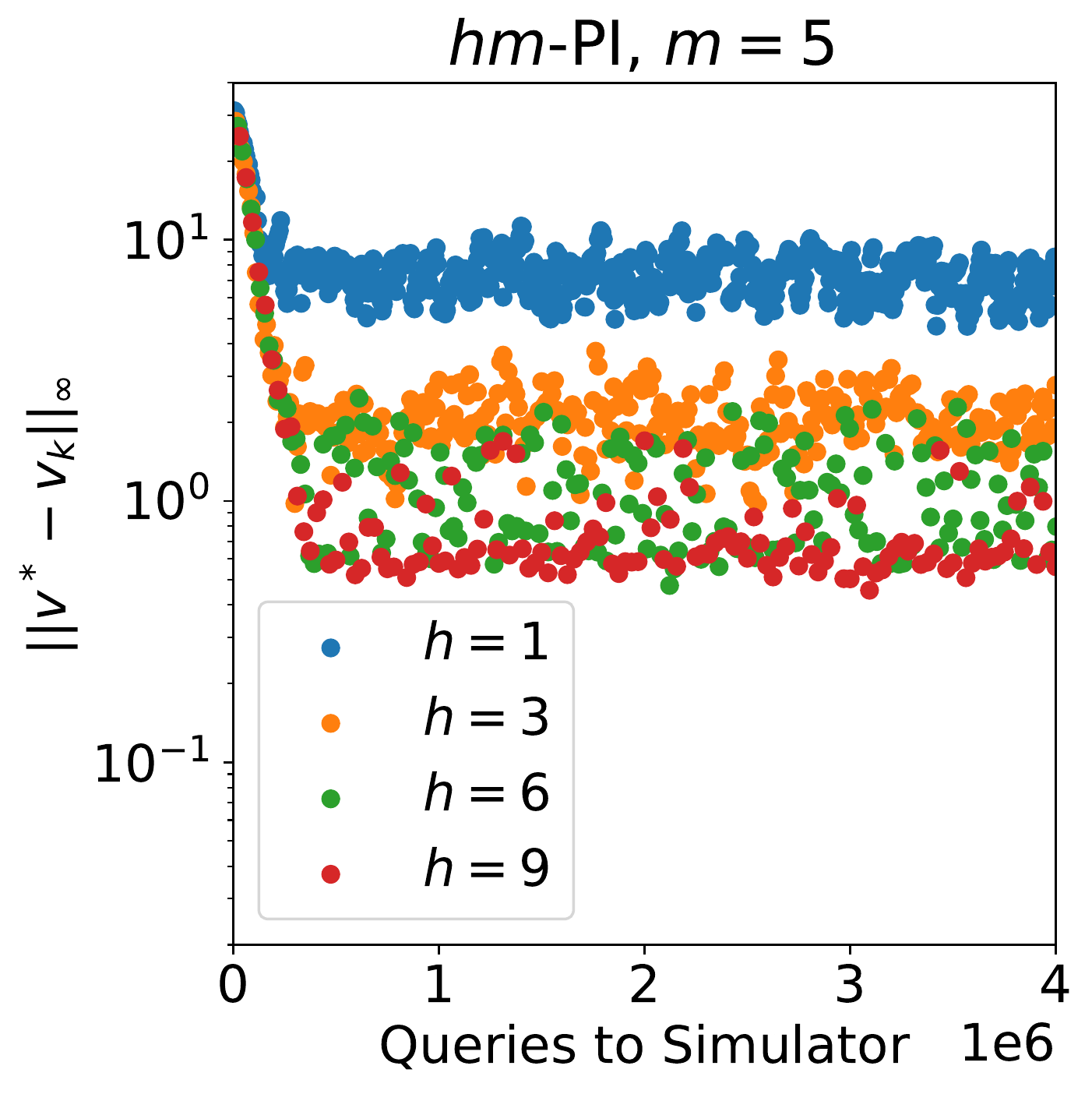}
\includegraphics[scale=0.35]{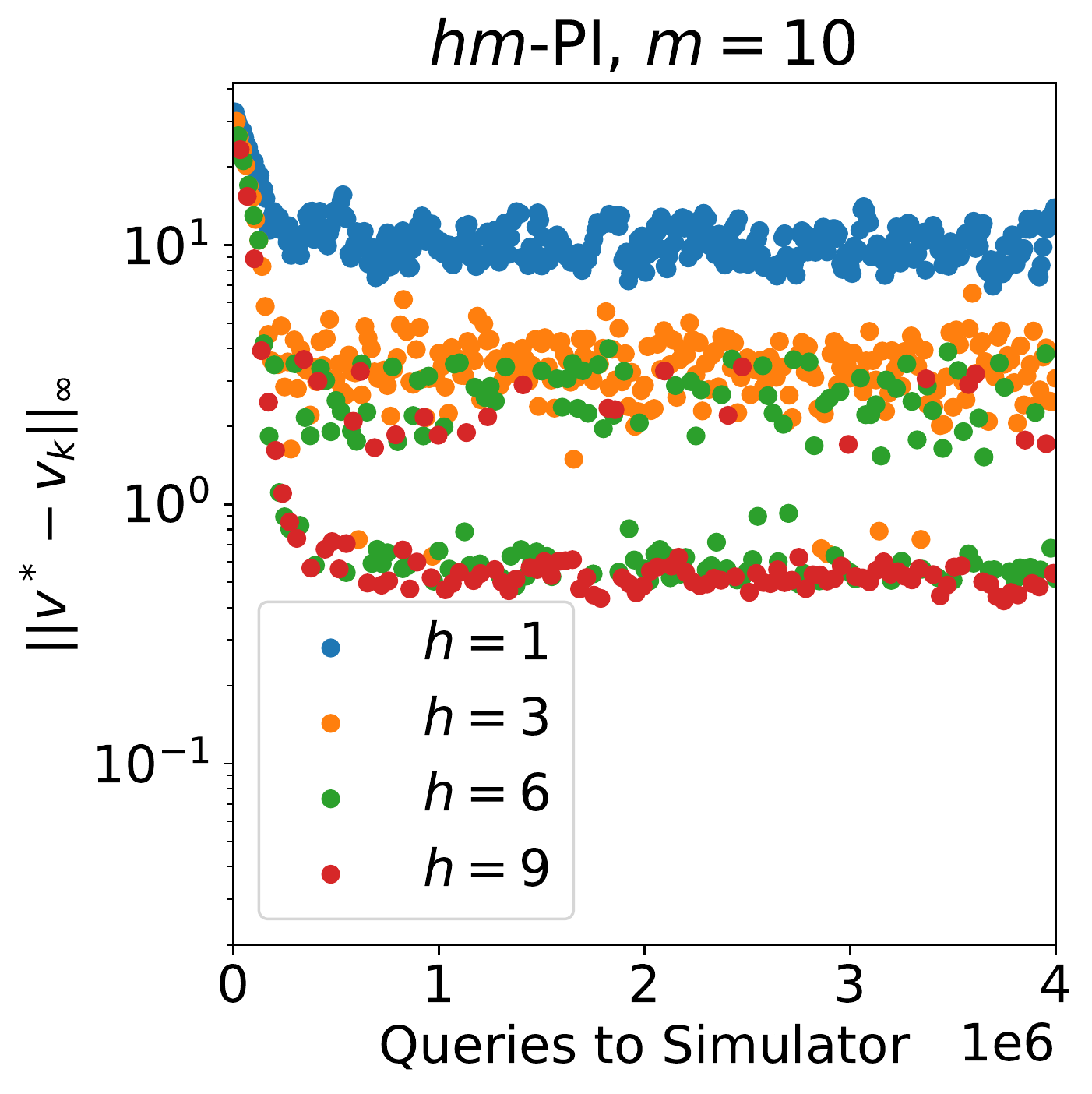}
}
\noindent\makebox[\textwidth][c]{
\includegraphics[scale=0.35]{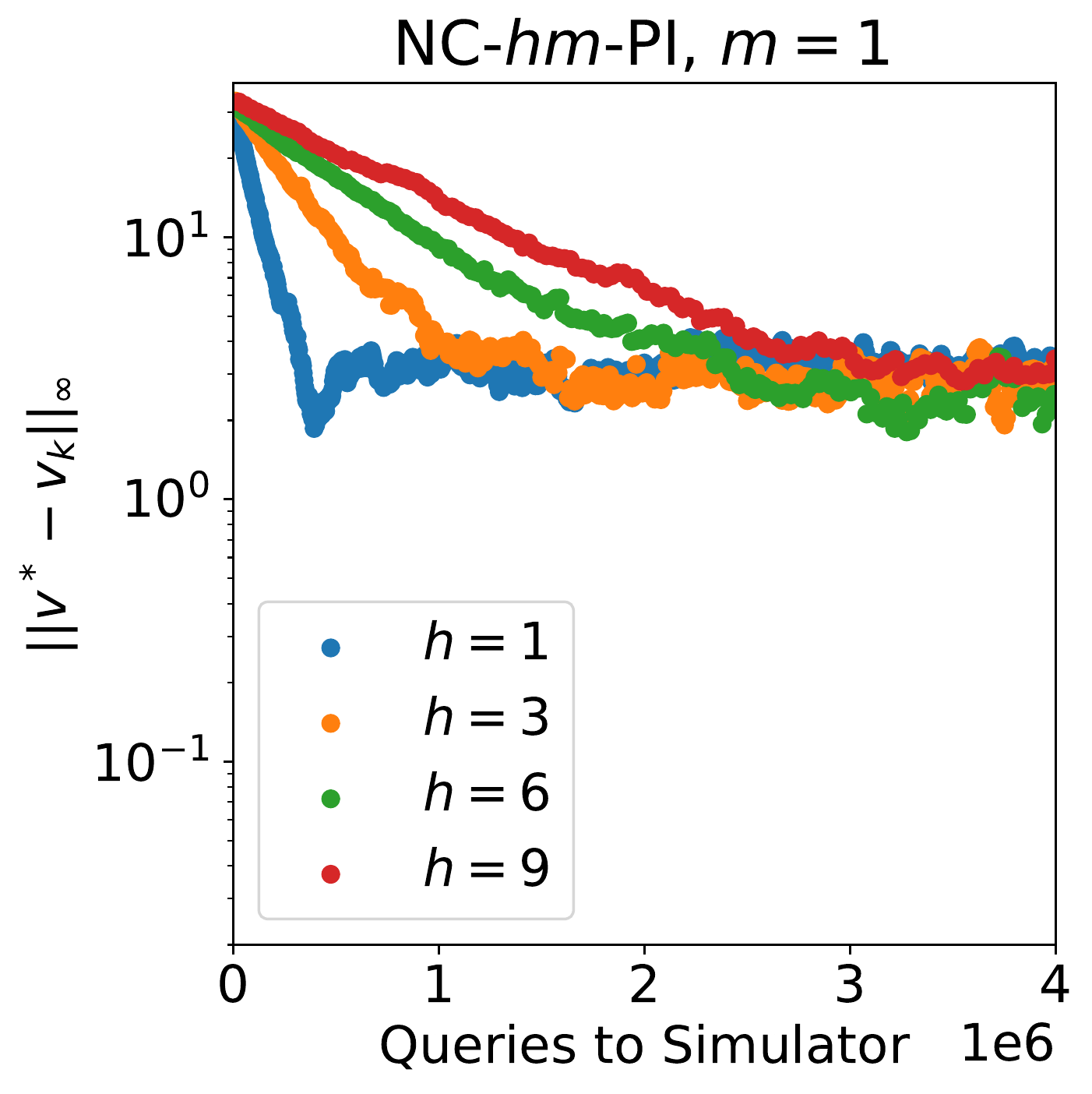}
\includegraphics[scale=0.35]{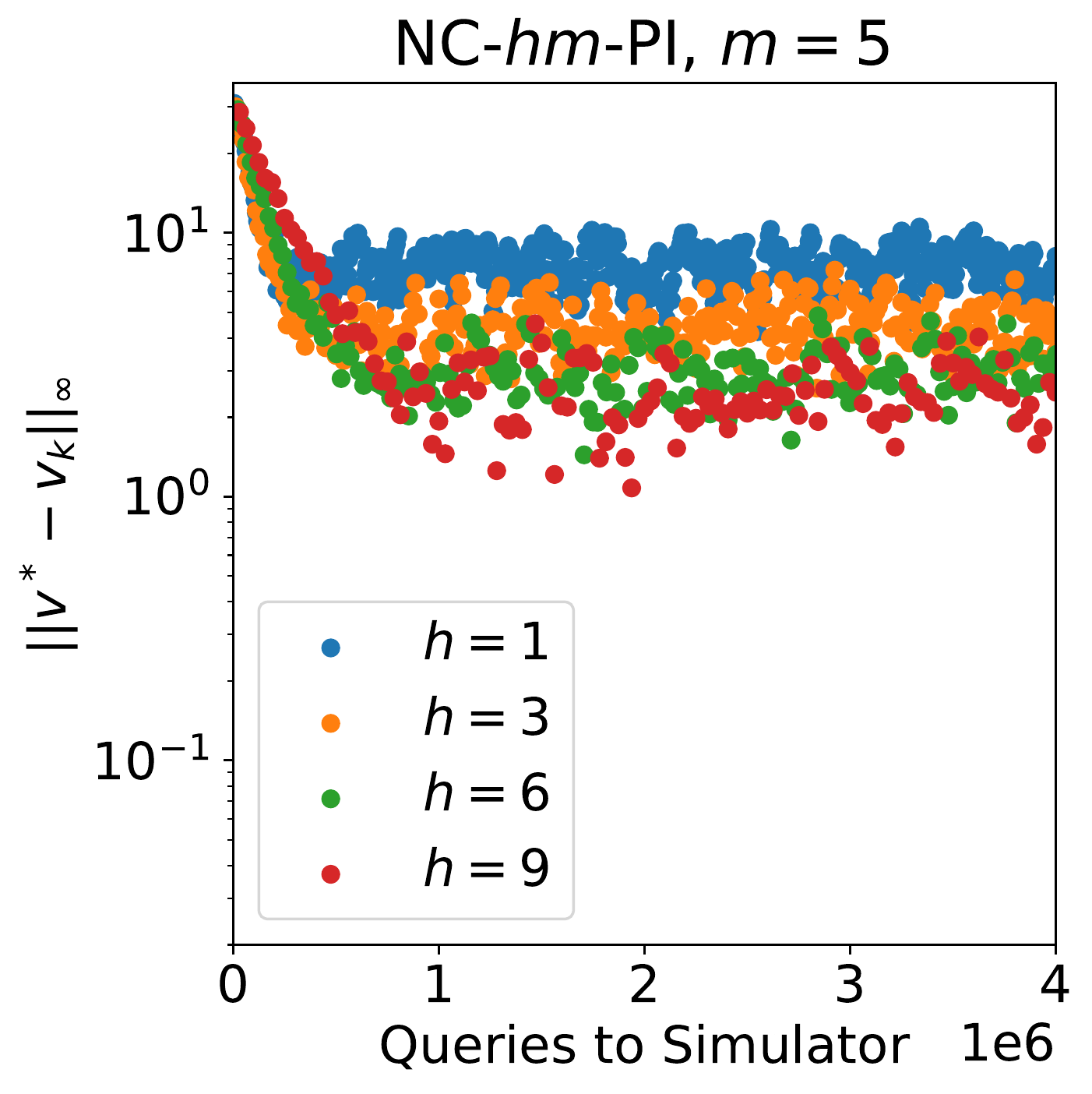}
\includegraphics[scale=0.35]{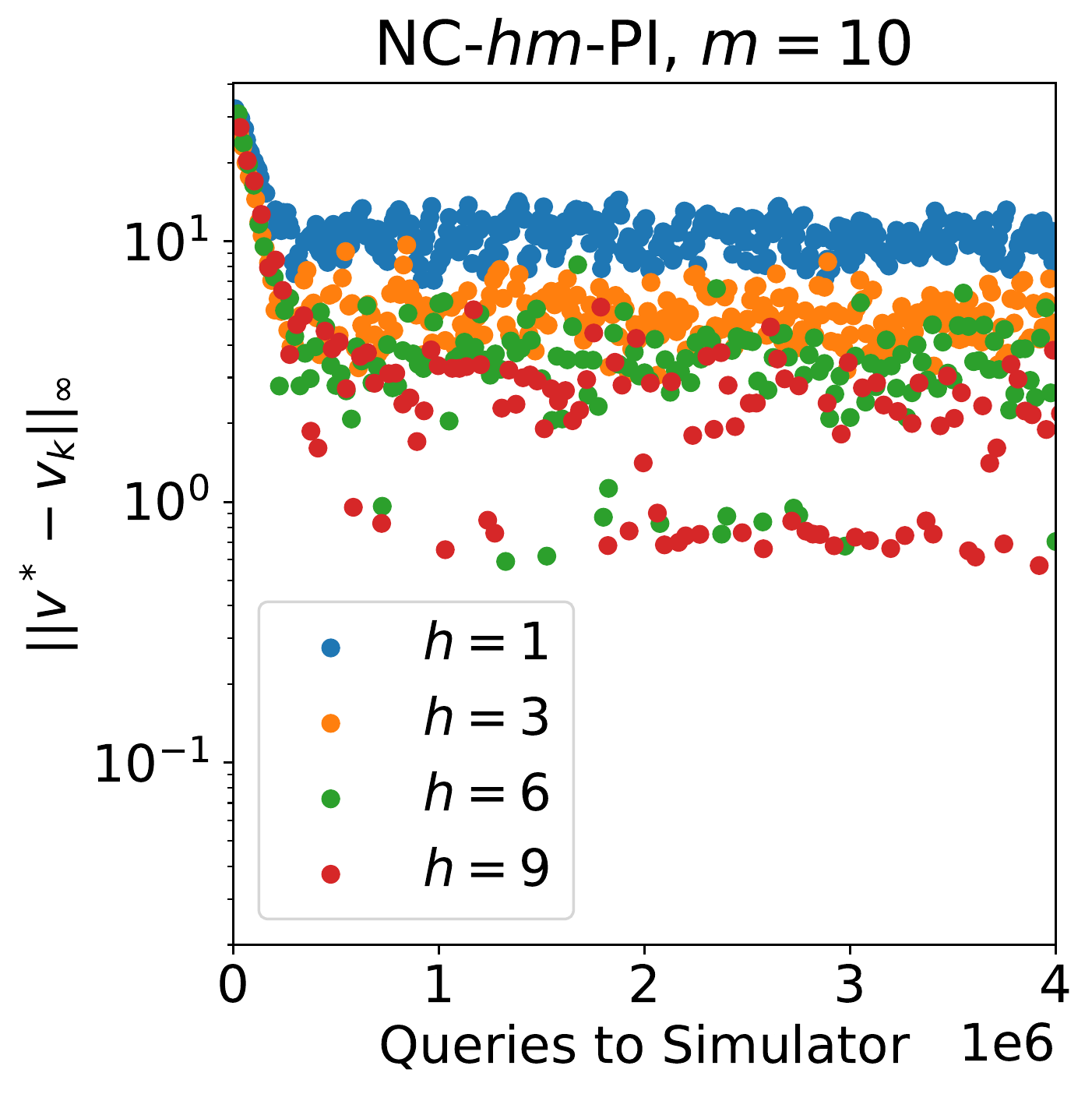}
}
\caption{$hm$-PI and NC-$hm$-PI performance for several $h$ and $m$ values. We measure $|| v^*-v_k ||_\infty$  versus total queries to simulator in each run. In this experiment we used $\forall s\in\mathcal{S},\ k,\ \epsilon_k(s)\sim U(-0.3,0.3),\ \delta_k(s)=0$, as described in Section~\ref{sec: experiments}.}
\label{fig: appendix res}
\end{figure*}}

\centering
\begin{figure}[ht]
\centering
\includegraphics[scale=0.4]{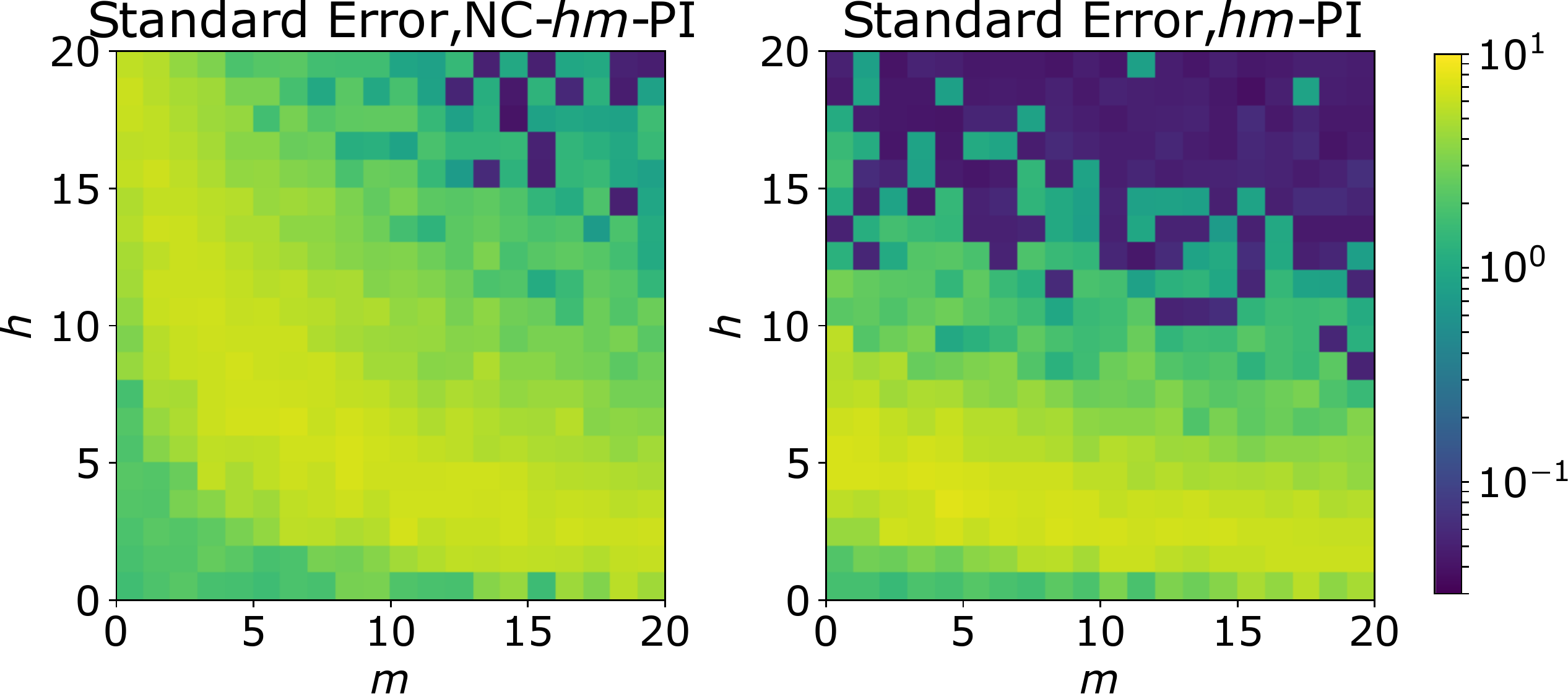}
\caption{Standard error versus $h$ and $m$ for the corresponding mean results given in Figure \ref{fig: approximate res}.
}
\label{fig: appendix res error bar}
\end{figure}

\end{document}